\documentclass[sigconf,nonacm]{acmart}
\usepackage{amsmath}
\usepackage{booktabs}
\usepackage{multirow}
\usepackage[ruled,vlined,linesnumbered]{algorithm2e}
\usepackage{subcaption}
\usepackage{enumitem}

\theoremstyle{acmplain}
\newtheorem{assumption}{Assumption}
\newtheorem{proposition}{Proposition}

\SetKwInOut{Input}{Input}
\SetKwInOut{Output}{Output}
\SetKwComment{Comment}{$\triangleright$\ }{}

\AtBeginDocument{%
  }

\setcopyright{acmlicensed}
\copyrightyear{2026}
\acmYear{2026}

\begin{document}

\title{Multi-channel Uplift Policy Learning}

\author{Changjian Liu}
\authornote{Work done during an internship at Taobao, Alibaba Group.}
\orcid{0009-0008-6235-6981}
\affiliation{
  \institution{Peking University}
  \city{Beijing}
  \country{China}
}
\email{cjliu25@stu.pku.edu.cn}

\author{Tianyu Wang}
\orcid{0009-0006-0292-2078}
\affiliation{
  \institution{Alibaba Group}
  \city{Beijing}
  \country{China}
  }
\email{yves.wty@@alibaba-inc.com}

\author{Xiaoxuan Deng}
\affiliation{%
  \institution{Alibaba Group}
  \city{Beijing}
  \country{China}
}
\orcid{0009-0004-2650-1492}
\email{dengxiaoxuan.dxx@alibaba-inc.com}

\author{Wentao Zhu}
\authornotemark[1]
\affiliation{%
  \institution{Beihang University}
  \city{Beijing}
  \country{China}
  }
\orcid{0009-0009-8359-285X}
\email{zy2406229@buaa.edu.cn}

\author{Yuwei Xu}
\authornotemark[1]
\affiliation{
  \institution{CUHK-shenzhen}
  \city{Shenzhen}
  \country{China}
}
\email{yuweixu@link.cuhk.edu.cn}
\orcid{0000-0002-7240-6435}

\author{Junqi Jin}
\correspondingauthor
\affiliation{%
  \institution{Alibaba Group}
  \city{Beijing}
  \country{China}
  }
\orcid{0000-0003-2424-2744}
\email{junqi.jjq@alibaba-inc.com}

\author{Yong Gao}
\correspondingauthor
\affiliation{
  \institution{Peking University}
  \city{Beijing}
  \country{China}
  }
\email{gaoyong@pku.edu.cn}
\orcid{0000-0003-1562-6228}

\author{Chuan Yu}
\affiliation{
  \institution{Alibaba Group}
  \city{Beijing}
  \country{China}
  }
\email{yuchuan.yc@alibaba-inc.com}
\orcid{0000-0001-8094-1545}

\author{Jian Xu}
\affiliation{
  \institution{Alibaba Group}
  \city{Beijing}
  \country{China}
  }
\email{xiyu.xj@taobao.com}
\orcid{0000-0003-3111-1005}

\author{Bo Zheng}
\affiliation{
  \institution{Alibaba Group}
  \city{Beijing}
  \country{China}
  }
\email{bozheng@alibaba-inc.com}
\orcid{0000-0002-4037-6315}

\renewcommand{\shortauthors}{Liu et al.}

\begin{abstract}
E-commerce platforms must allocate fixed marketing budgets across multiple channels to maximize business utility. However, standard predict-then-optimize (PTO) paradigms fail in this compositional space due to observational confounding and severe extrapolation. We formulate this challenge as a simplex-constrained uplift decision problem and propose \textsc{ReAlloc}, a fast-slow causal framework. Specifically, an agile Orthogonal Teacher extracts unbiased local gradients from short-term logs, while an Explanation-Guided Student distills them into a structured marginal field over long-term horizons. This design enables support-aware, conservative decisions that capture cross-channel substitutions. Extensive simulations and large-scale online A/B tests on Taobao platform demonstrate that \textsc{ReAlloc} achieves simultaneous lifts in both pay order and income.
\end{abstract}

\keywords{uplift, resource allocation, decision making, e-commerce marketing}

\begin{CCSXML}
<ccs2012>
   <concept>
       <concept_id>10010405.10003550</concept_id>
       <concept_desc>Applied computing~Electronic commerce</concept_desc>
       <concept_significance>500</concept_significance>
       </concept>
   <concept>
       <concept_id>10002951.10003227.10003241</concept_id>
       <concept_desc>Information systems~Decision support systems</concept_desc>
       <concept_significance>500</concept_significance>
       </concept>
   <concept>
       <concept_id>10002951.10003227.10003447</concept_id>
       <concept_desc>Information systems~Computational advertising</concept_desc>
       <concept_significance>300</concept_significance>
       </concept>
   <concept>
       <concept_id>10010147.10010178.10010187.10010192</concept_id>
       <concept_desc>Computing methodologies~Causal reasoning and diagnostics</concept_desc>
       <concept_significance>500</concept_significance>
       </concept>
 </ccs2012>
\end{CCSXML}

\ccsdesc[500]{Applied computing~Electronic commerce}
\ccsdesc[500]{Information systems~Decision support systems}
\ccsdesc[300]{Information systems~Computational advertising}
\ccsdesc[500]{Computing methodologies~Causal reasoning and diagnostics}

\maketitle

\section{Introduction}

In modern e-commerce systems, platforms increasingly act as centralized decision makers. Specifically, they must allocate limited marketing resources across heterogeneous items and diverse intervention channels \cite{albert2022ecommerce, deng2023multichannel}.
Moving beyond mere outcome prediction, the core objective is to learn intervention policies that causally drive business utility, such as sales, income, and Gross Merchandise Volume (GMV). This naturally frames an \textit{uplift policy learning} problem: given a pre-decision state, the platform must allocate resources to maximize the incremental outcome \cite{gutierrez2017causal,athey2021policy,olaya2020survey}.

\begin{figure*}
    \centering
    \includegraphics[width=0.75\linewidth]{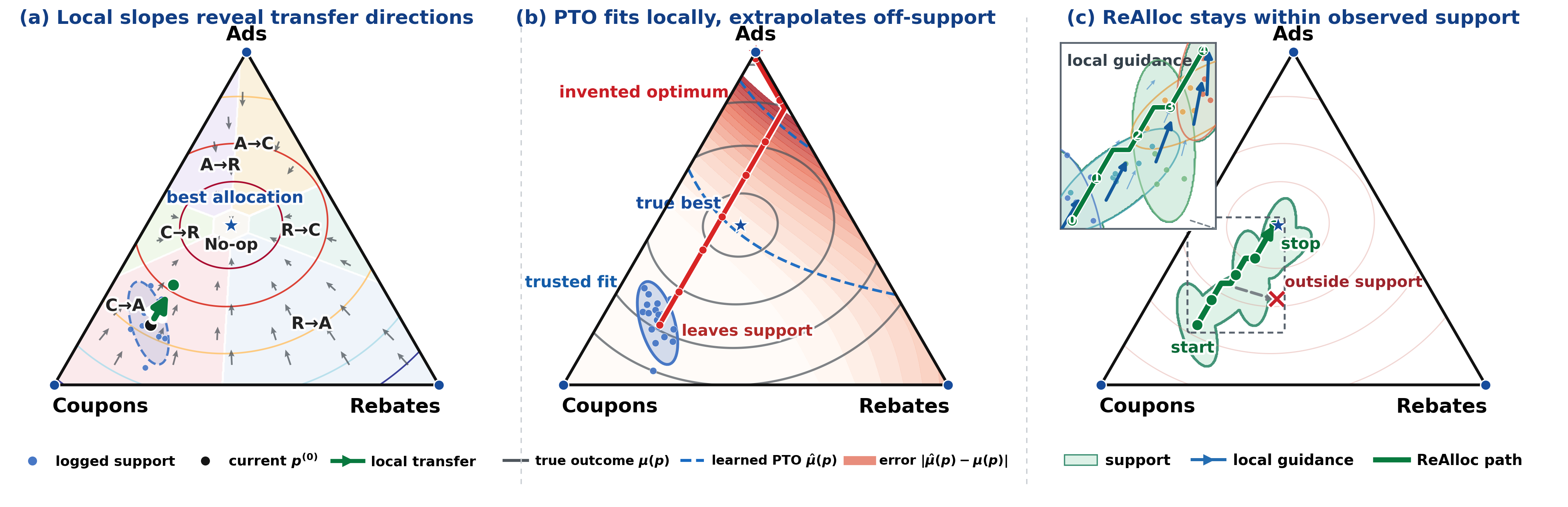}
    \caption{Conceptual illustration of \textsc{ReAlloc}. (a) Local response slopes identify promising reallocation directions. (b) Global PTO extrapolation produces unreliable decisions. (c) ReAlloc composes local slopes within support, leading to better decisions.}
    \label{fig:teasser}
\end{figure*}

Most existing uplift and contextual decision methods focus on binary, discrete, or single-channel treatments \cite{zhao2017uplift, zhao2019uplift}. However, real-world systems are inherently multi-channel and resource constrained. This transforms the intervention from an independent scalar action into a compositional allocation vector constrained on a budget simplex. Such a constraint fundamentally alters the decision landscape from the absolute individual treatment effects (ITE) to the \textit{relative marginal value} of reallocating resources across channels.
A concrete example arises in marketing hosting systems \cite{wang2026marketing}, where a fixed total budget is split between traffic-driving channels (e.g., advertising) and conversion enhancing benefits (e.g., coupons and rebates). Although potentially complementary, these channels frequently substitute or cannibalize each other. For instance, shifting funds to advertising increases exposure but dilutes conversion incentives. Consequently, the final outcome is governed by a complex joint response surface.

A common industrial paradigm is predict-then-optimize (PTO): fit a response model, then search to maximize the predicted outcome \cite{elmachtoub2022smart,wilder2019decision}. However, PTO is fundamentally insufficient for multi-channel uplift learning for three reasons. \textbf{First}, channel-wise models ignore cross-channel interactions; their independent curves cannot recover the optimum of the joint response surface. \textbf{Second}, even a joint black-box model is unsafe under global optimization; the optimizer may exploit extrapolation errors in unsupported regions, yielding aggressive policies. \textbf{Third}, and most critically, prediction accuracy does not imply decision quality. Standard losses (e.g., MSE) optimize average prediction, whereas deployment relies on counterfactual ranking \cite{devriendt2022learning}. Minor prediction errors in high-uplift regions are severely amplified by the optimizer, ultimately degrading the result.
This optimization vulnerability is further exacerbated by observational confounding. Historical allocations are driven by legacy business policies, systematically entangling treatment assignments with item states. Consequently, models achieve strong factual prediction but yield biased causal gradients, rendering downstream optimization ineffective.

To address these challenges, we propose \textsc{ReAlloc}, a framework targeting local reallocation for multi-channel uplift decision problem. It learns the causal marginal gradient of the budget simplex. First, we use an orthogonalized teacher to remove confounding effects and recover unbiased local gradients. These gradients are then distilled into a student marginal field that guides conservative decisions within the observed support. 
\textsc{ReAlloc} turns unsafe PTO into support-aware causal reallocation, enabling precise and stable optimization under multi-channel budget constraints.
Extensive offline experiments and large-scale online A/B tests on the Taobao platform demonstrate its superiority. It significantly outperforms uplift and PTO baselines in counterfactual ranking and decision quality. More importantly, online results reveals substantial improvements in both pay order 3.53 \% and income 3.26 pt.
Our contributions are: 
\begin{enumerate}[leftmargin=*,topsep=3pt,itemsep=3pt,parsep=0pt,partopsep=0pt]
    \item \textbf{Compositional Uplift.} We formulate multi-channel budget allocation as compositional uplift on the simplex, targeting relative marginal effects under zero-sum constraints.
    
    \item \textbf{Support-Aware Causal Reallocation.}  We propose \textsc{ReAlloc}, which decouples causal response learning from support-aware decisions through a distilled conservative marginal field.
    
    \item \textbf{Industrial-Scale Deployment.} Deployed in Taobao's system, \textsc{ReAlloc} simultaneously improves pay order and income.
\end{enumerate}
\section{Related Work}
\label{sec:related_work}

\subsection{Uplift Modeling}
Uplift modeling and heterogeneous treatment effect (HTE) estimation aim to quantify the incremental impact of interventions \cite{radcliffe2007using, gutierrez2017causal, kunzel2019metalearners, wager2018estimation}. While classical and modern neural estimators have achieved significant success, they primarily focus on binary or discrete treatments, targeting absolute individual treatment effects (ITE) \cite{shalit2017estimating, shi2019adapting, nie2021quasi, zhong2022descn}. Recent extensions consider multiple treatments \cite{olaya2020survey,zhao2019uplift} or continuous doses \cite{hirano2005propensity,kennedy2017nonparametric,williams2020causal}. However, these formulations typically model treatments as separate arms or a one-dimensional dose, rather than as a coupled composition.
At the intersection of causal estimation and resource allocation, existing methods fall short of our setting. Budget-constrained uplift methods \cite{albert2022ecommerce,ai2022lbcf,sun2024end} allocate scarce resources across users via knapsack optimization, but restrict each user to a scalar or discrete treatment. Multi-channel advertising systems \cite{deng2023multichannel,shen2023crosschannel} coordinate campaign-level budgets, treating channel responses as aggregate primitives. Multi-touch attribution \cite{geyik2015multitouch,kumar2020camta} assigns conversion credit across sequences but lacks explicit simplex-constrained optimization. In contrast, our setting requires learning \textit{item-level} causal substitution under a fixed budget, where increasing one channel's allocation necessarily decreases another's, demanding a fundamentally different compositional formulation.

\subsection{Decision-Focused Optimization}
Offline policy learning evaluates policies from logged data using inverse-propensity or doubly robust estimators \cite{dudik2011doubly,dudik2014doubly,athey2021policy}. In industrial pipelines, the dominant paradigm is PTO, which decouples reward prediction from downstream optimization. To bridge the prediction-decision gap, Decision-Focused Learning (DFL) and SPO-style losses directly train predictive models against downstream decision metrics \cite{elmachtoub2022smart,wilder2019decision,sadana2025survey}. 
While PTO and DFL successfully align prediction with decision objectives in deterministic settings, they are fundamentally ill-equipped for causal compositional allocation. 
First, Factual Objective vs. Counterfactual Gradients. DFL optimizes factual reconstruction end-to-end. However, deployment strictly requires correct counterfactual marginal ranking. Differentiating through the optimizer with factual losses does not guarantee accurate causal gradients, especially in high-uplift regions.  Second, Lack of Extrapolation Conservatism. Global optimizers over learned black-box surfaces are notoriously aggressive. Unlike Conservative Q-Learning in offline RL \cite{kumar2020conservative}, standard DFL lacks mechanisms to penalize out-of-support predictions. On the constrained simplex, DFL actively exploits extrapolation errors, yielding disastrous out-of-distribution policies.  Third, Confounding Amplification. Historical allocations are heavily entangled with item states via legacy policies. Because DFL aggressively optimizes these biased factual predictions end-to-end without explicit causal orthogonalization, it amplifies unreliable causal gradients \cite{chernozhukov2018double,nie2021quasi}, leading to severe policy degradation.
\section{Problem Formulation}
\label{sec:problem}

\subsection{Setup}
Each decision has a fixed budget across \(K\) channels. Let \(X=(H,B)\) collect the pre-decision state \(H\in\mathcal H\) and the available budget \(B>0\). We observe historical decisions:
\begin{equation}
    \mathcal D
    =\{(X_i,P_i,Y_i)\}_{i=1}^{n},
    \qquad
    P_i\in\Delta^{K-1}
    :=\left\{p\in\mathbb R_{+}^{K}:\mathbf 1^\top p=1\right\},
    \label{eq:logged_data}
\end{equation}
where $P_i$ is the allocation proportion and $Y_i$ is the realized outcome. Uppercase $P_i$ denotes a historical action, whereas lowercase $p$ denotes a candidate decision.
Writing $Y(p)\equiv Y(Bp)$ for the potential outcome under allocation $p$, define
\(\mu(X,p):=\mathbb E[Y(p)\mid X]\) and \(V(\pi):=\mathbb E_X[\mu(X,\pi(X))]\).
Given a policy class $\Pi$, our objective is
\begin{equation}
    \pi^\star\in\arg\max_{\pi\in\Pi}V(\pi),
    \qquad
    \pi:X\mapsto\Delta^{K-1}.
    \label{eq:objective}
\end{equation}

\subsection{Local Reallocation Is the Decision Primitive}
Fixed budget makes every feasible first-order perturbation zero-sum. The tangent space of the simplex and its orthogonal projector are:
\begin{equation}
    \mathcal T
    :=\{v\in\mathbb R^K:\mathbf 1^\top v=0\},
    \qquad
    \Pi_{\mathcal T}
    :=I_K-\frac{1}{K}\mathbf 1\mathbf 1^\top.
    \label{eq:tangent_projection}
\end{equation}
For channels $k\neq\ell$, an infinitesimal transfer from $k$ to $\ell$ has local effect
\begin{align}
    D_{k\to\ell}\mu(X,p)
    &=\left.
      \frac{\mathrm d}{\mathrm d\delta}
      \mu\!\left(X,p+\delta(e_\ell-e_k)\right)
      \right|_{\delta=0^+}
      \nonumber\\
    &=(e_\ell-e_k)^\top\nabla_p\mu(X,p)
      =(e_\ell-e_k)^\top g^\star(X,p),
    \label{eq:pairwise_reallocation}
\end{align}
where \(g^\star(X,p)=\Pi_{\mathcal T}\nabla_p\mu(X,p)\) is the causal reallocation field. The policy signal is therefore not an absolute channel effect, but the relative marginal return from moving budget between channels.

\subsection{Production Failure Modes of PTO}
A standard industrial solution to~\eqref{eq:objective} is
PTO:
\begin{align}
    &\widehat\theta
    \in\arg\min_\theta
      \frac{1}{n}\sum_{i=1}^{n}
      \ell\!\left(Y_i,\widehat\mu_\theta(X_i,P_i)\right),
      \label{eq:pto_fit}\\
    &\widehat\pi_{\mathrm{PTO}}(X)
    \in\arg\max_{p\in\Delta^{K-1}}
      \widehat\mu_{\widehat\theta}(X,p).
      \label{eq:pto_pipeline}
\end{align}
Despite strong factual accuracy, this pipeline faces three failures. \textbf{Observational assignment:} historical allocations are selected by legacy policies, so causal interpretation requires adjustment. \textbf{Objective mismatch:} factual prediction loss does not control the marginal slopes or candidate rankings consumed by the optimizer. \textbf{Support mismatch:} global optimization may exploit response estimates outside the logged action region; These failures motivate the three stages of \textsc{ReAlloc}: orthogonal local estimation, marginal gradient distillation, and support-aware decision.

\section{Method}
\label{sec:method}

We propose \textsc{ReAlloc}, a casual teacher-student framework for fixed-budget multi-channel uplift learning; see Fig.~\ref{fig:method}. To handle the inherent non-stationarity of marketing environments and the vulnerabilities of standard PTO, \textsc{ReAlloc} operates as a \textbf{dual system}. At each update round, a \textbf{fast teacher} is trained on recent logs to extract unbiased local causal geometry, while a \textbf{slow student} accumulates this through a replay buffer to produce stable decisions. As detailed below, the three stages of \textsc{ReAlloc} systematically address the challenges of observational confounding, prediction-decision mismatch, and extrapolation vulnerability; see Algorithm~\ref{alg:realloc}.

\begin{figure}[t]
    \centering
    \includegraphics[width=\linewidth]{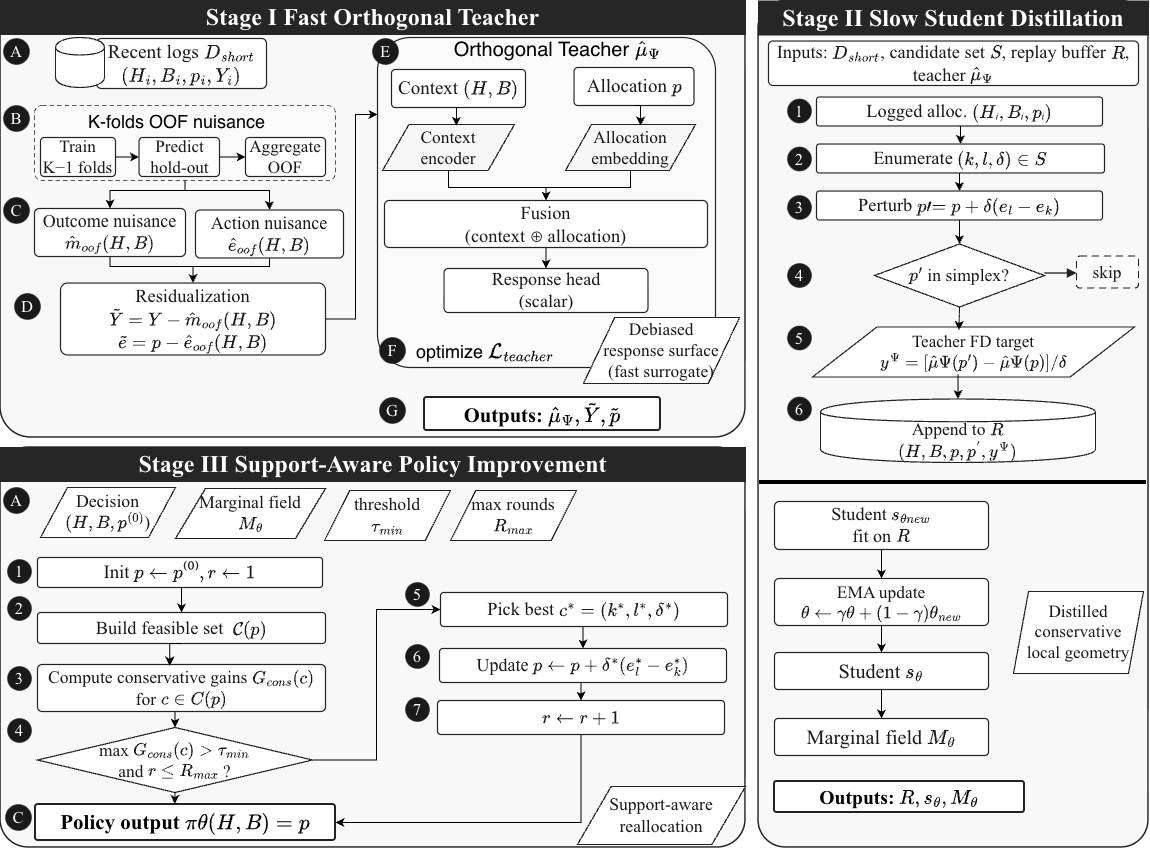}
    \caption{Overview of \textsc{ReAlloc}}
    \label{fig:method}
\end{figure}

\subsection{Stage I: Orthogonal Response Teacher}
\label{subsec:teacher}

To address \textit{observational confounding} (Challenge 1), we use orthogonalization, akin to Robinson's transformation in double machine learning (DML). For each recent training window $\mathcal D_{\mathrm{short}}$, we first estimate two nuisance functions via cross-fitting:
\begin{equation}
    \hat m(H,B) \approx \mathbb E[Y \mid H,B],
    \qquad
    \hat e(H,B) \approx \mathbb E[p \mid H,B],
    \label{eq:teacher-nuisance}
\end{equation}
where $\hat m$ captures the baseline demand and $\hat e \in \Delta^{K-1}$ captures the conditional mean action, i.e., the legacy policy tendency. We then construct the residualized outcome and treatment:
\begin{equation}
    \tilde Y_i = Y_i - \hat m(H_i,B_i),
    \qquad
    \tilde p_i = p_i - \hat e(H_i,B_i).
    \label{eq:teacher-residuals}
\end{equation}
Constrained by simplex the simplex, $\tilde p_i$ is a valid tangent-space deviation from the historical allocation, with the predictable component of treatment assignment removed. The teacher's response surface is parameterized as:
\begin{equation}
    \hat\mu_\psi(H,B,p)
    =
    \hat m(H,B)
    +
    r_\psi\bigl(H,B,p-\hat e(H,B)\bigr),
    \label{eq:teacher-surface}
\end{equation}
where $r_\psi$ is a residual response branch anchored at $r_\psi(H,B,0)=0$, ensuring that it models only the outcome variation driven by allocation deviations. Let
$\mathcal P_0 = I - \frac{1}{K}\mathbf 1\mathbf 1^\top$
be the projection onto the tangent space. The teacher's local marginal field is
\begin{equation}
    g_\psi(H,B)
    =
    \left.
    \mathcal P_0 \nabla_z r_\psi(H,B,z)
    \right|_{z=0}.
    \label{eq:teacher-gradient}
\end{equation}
We train the teacher by minimizing the composite objective
\begin{equation}
\begin{aligned}
    \mathcal L_{\mathrm{teacher}}
    =
    \frac{1}{|\mathcal D_{\mathrm{short}}|}
    \sum_{i\in\mathcal D_{\mathrm{short}}}
    \Bigl[
    &\ell\bigl(Y_i,\hat\mu_\psi(H_i,B_i,p_i)\bigr)
    + \lambda_{\mathrm{res}}
      \ell\bigl(\tilde Y_i,r_\psi(H_i,B_i,\tilde p_i)\bigr)
    \\
    &+ \lambda_{\mathrm{grad}}
      \ell\bigl(
        \tilde Y_i,
        \langle g_\psi(H_i,B_i),\tilde p_i\rangle
      \bigr)
    \Bigr].
    \label{eq:teacher-loss}
\end{aligned}
\end{equation}
The first term maintains factual predictive accuracy, the second forces the residual branch to explain residual outcome variation, and the third explicitly regularizes the local directional derivative, yielding reliable causal gradients in supported neighborhoods.

\subsection{Stage II: Student Marginal Distillation}
\label{subsec:student}

While the fast teacher adapts to recent dynamics, directly optimizing its response surface remains prone to the PTO failure mode (Challenge 2). We therefore train a slow student to distill the teacher's local geometry into a globally consistent marginal field. To ensure that the learned marginal field is integrable, i.e., path-consistent, we define the student through a scalar potential function $s_\theta(H,B,p)\in\mathbb R$. Its projected marginal utility is
\begin{equation}
    u_\theta(H,B,p)
    =
    \mathcal P_0 \nabla_p s_\theta(H,B,p).
    \label{eq:student-utility}
\end{equation}
The predicted local marginal gain for reallocating budget from channel $k$ to channel $l$ is
\begin{equation}
    M_{\theta,k\rightarrow l}(H,B,p)
    =
    u_{\theta,l}(H,B,p)-u_{\theta,k}(H,B,p).
    \label{eq:student-pairwise}
\end{equation}
This parameterization guarantees that finite allocation changes can be scored consistently through potential differences, avoiding cyclic or contradictory gradients.

At each round, the teacher generates supervision targets. For a feasible perturbation
$p' = p+\delta(e_l-e_k)$,
the finite-difference target is
\begin{equation}
    y^\psi_{k\rightarrow l,\delta}
    =
    \frac{
        \hat\mu_\psi(H,B,p')-\hat\mu_\psi(H,B,p)
    }{\delta}.
    \label{eq:teacher-finite-difference}
\end{equation}
We also extract the teacher's projected Jacobian
$g^\psi=\mathcal P_0\nabla_p\hat\mu_\psi(H,B,p)$.
The student is optimized over a replay buffer $\mathcal R$ of these targets:
\begin{equation}
\begin{aligned}
    \mathcal L_{\mathrm{student}}
    =
    \mathbb E_{\mathcal R}
    \Bigl[
    &\lambda_{\mathrm{pair}}
    \bigl(
        M_{\theta,k\rightarrow l}
        -y^\psi_{k\rightarrow l,\delta}
    \bigr)^2
    \\
    &+\lambda_{\mathrm{jac}}
    \left\|u_\theta-g^\psi\right\|_2^2
    \Bigr].
    \label{eq:student-loss}
\end{aligned}
\end{equation}
By accumulating targets in $\mathcal R$ and updating the deployed student via exponential moving average,
$\theta^{(t)}=\gamma\theta^{(t-1)}+(1-\gamma)\theta_{\mathrm{new}}$,
the student learns a stable, decision-focused marginal geometry.

\subsection{Stage III: Support-Aware Decision}
\label{subsec:decision}

To prevent \textit{extrapolation vulnerability} (Challenge 3), \textsc{ReAlloc} abandons unconstrained global maximization and instead performs conservative local reallocations. Given the current allocation $p$, we enumerate feasible local steps
\begin{equation}
    \mathcal C(p)
    =
    \left\{
        (k,l,\delta):
        p+\delta(e_l-e_k)\in\Delta^{K-1}
    \right\}.
    \label{eq:candidate-set}
\end{equation}
For a candidate $c=(k,l,\delta)$, let
$p_c=p+\delta(e_l-e_k)$.
The expected gain is computed from the student's potential difference:
\begin{equation}
    \widehat G_c
    =
    s_\theta(H,B,p_c)-s_\theta(H,B,p).
    \label{eq:potential-gain}
\end{equation}
To penalize unreliable updates, we define the conservative gain
\begin{equation}
    \widehat G_c^{\mathrm{cons}}
    =
    \widehat G_c
    -\beta\widehat\sigma_c
    -\lambda_s\varphi(\omega_c),
    \qquad
    \varphi(\omega_c)=-\log(\omega_c+\epsilon).
    \label{eq:conservative-gain}
\end{equation}
Here, $\widehat\sigma_c$ is the predictive uncertainty, estimated using ensemble variance or MC Dropout, and $\omega_c\in[0,1]$ is a directional support score, computed using kernel density or inverse KNN distance to the replay buffer $\mathcal R$. The latter measures whether similar reallocations are empirically supported by historical data. The policy selects the optimal conservative candidate:
\begin{equation}
    c^\star
    =
    \arg\max_{c\in\mathcal C(p)}
    \widehat G_c^{\mathrm{cons}}.
    \label{eq:conservative-candidate}
\end{equation}
The allocation is updated only if the conservative gain exceeds a safety threshold $\tau_{\min}$; otherwise, a no-op is executed. This localized, support-regularized search avoids unsupported, high-risk regions of the simplex while enabling conservative policy improvement.

\begin{algorithm}[t]
\caption{Implementation of \textsc{ReAlloc}}
\label{alg:realloc}
\SetAlgoLined
\KwIn{Recent logs $\mathcal D_{\mathrm{short}}$, replay buffer $\mathcal R$, candidate set $\mathcal S$, threshold $\tau_{\min}$, EMA rate $\gamma$}
\KwOut{Potential student $s_\theta$, marginal field $M_\theta$, policy $\pi_\theta$}

\textbf{Stage I: Fast Orthogonal Teacher}\;
Estimate nuisances $\hat m(H,B)$ and $\hat e(H,B)$ on $\mathcal D_{\mathrm{short}}$\;
Compute residuals $\tilde Y_i$ and $\tilde p_i$\;
Train teacher $\hat\mu_\psi$ by minimizing $\mathcal L_{\mathrm{teacher}}$\;

\textbf{Stage II: Slow Student Distillation}\;
\ForEach{$(H_i,B_i,p_i)\in\mathcal D_{\mathrm{short}}$}{
    \ForEach{$(k,l,\delta)\in\mathcal S$}{
        $p_i'\leftarrow p_i+\delta(e_l-e_k)$\;
        \If{$p_i'\in\Delta^{K-1}$}{
            $y^\psi_{i,k\rightarrow l,\delta}
            \leftarrow
            \dfrac{
                \hat\mu_\psi(H_i,B_i,p_i')
                -\hat\mu_\psi(H_i,B_i,p_i)
            }{\delta}$\;
            Add $(H_i,B_i,p_i,k,l,\delta,y^\psi_{i,k\rightarrow l,\delta})$ to $\mathcal R$\;
        }
    }
}
Update potential student $s_{\theta_{\mathrm{new}}}$ on $\mathcal R$ by minimizing $\mathcal L_{\mathrm{student}}$\;
$\theta\leftarrow\gamma\theta+(1-\gamma)\theta_{\mathrm{new}}$\;

\textbf{Stage III: Support-Aware Policy Improvement}\;
\ForEach{decision instance $(H,B,p^{(0)})$}{
    $p\leftarrow p^{(0)}$; $r\leftarrow1$\;
    \While{$r\le R_{\max}$ and $\max_c\widehat G_c^{\mathrm{cons}}(H,B,p)>\tau_{\min}$}{
        $c^\star\leftarrow
        \arg\max_{c\in\mathcal C(p)}
        \widehat G_c^{\mathrm{cons}}(H,B,p)$
        \tcp*[r]{Best conservative candidate}
        $p\leftarrow p+\delta^\star(e_{l^\star}-e_{k^\star})$; $r\leftarrow r+1$\;
    }
    $\pi_\theta(H,B)\leftarrow p$\;
}
\KwRet{$s_\theta$, $M_\theta$, $\pi_\theta$}\;
\end{algorithm}

\section{Theory}
\label{sec:theory}

Let $p_0(X)$ denote the incumbent allocation and let $\mathcal C(X)\subseteq\Delta^{K-1}$ denote the supported region. For an absolutely continuous path $\gamma:[0,1]\to\Delta^{K-1}$, define the path length:
\begin{equation}
    L(\gamma):=\int_0^1\|\dot\gamma(t)\|_2\,\mathrm dt.
    \label{eq:path_length}
\end{equation}
The path is support-admissible if
$\gamma(t)\in\mathcal C(X)$ for all $t\in[0,1]$. Define the \textbf{Pointwise Uplift} \(\Delta\) and \textbf{Uplift} \(\Gamma\):
\begin{align}
    \Delta(X,p)
    &:=\mu(X,p)-\mu(X,p_0(X)),
    \label{eq:pointwise_uplift}\\
    \Gamma(\pi)
    &:=\mathbb E_X[\Delta(X,\pi(X))],
    \qquad
    \mathcal R:=\{(X,p):p\in\mathcal C(X)\}.
    \label{eq:uplift_definitions}
\end{align}
Formal assumptions and proofs are deferred to
Appendix~\ref{app:theory_proofs}.

\subsection{Global Uplift Accumulates Local Return}
\label{subsec:theory_simplex}

\begin{theorem}[Simplex uplift along a feasible path]
\label{thm:simplex_integral}
Let $\gamma$ be an absolutely continuous path with
$\gamma(0)=p_0(X)$ and $\gamma(1)=p$. If $\mu(X,\cdot)$ is continuously
differentiable along $\gamma$, then
\begin{equation}
    \Delta(X,p)
    =\int_0^1
      g^\star(X,\gamma(t))^\top\dot\gamma(t)\,\mathrm dt.
    \label{eq:uplift_integral}
\end{equation}
\end{theorem}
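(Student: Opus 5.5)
The plan is to apply the fundamental theorem of calculus to the scalar function $\phi(t):=\mu(X,\gamma(t))$ on $[0,1]$, and then use the simplex constraint to replace the full gradient by its tangent-space projection $g^\star$. Throughout, $X$ is fixed, so everything is a statement about a function on $\Delta^{K-1}$.

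\textbf{Step 1: absolute continuity of $\phi$.} Since $\mu(X,\cdot)$ is $C^1$ in a neighbourhood of the compact image $\gamma([0,1])$, it is locally Lipschitz there. A Lipschitz function composed with an absolutely continuous path is absolutely continuous, so $\phi$ is absolutely continuous on $[0,1]$.

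\textbf{Step 2: chain rule.} At every $t$ where $\dot\gamma(t)$ exists (almost every $t$), the chain rule gives
\[
\dot\phi(t)=\nabla_p\mu(X,\gamma(t))^\top\dot\gamma(t).
\]
The fundamental theorem of calculus for absolutely continuous functions then yields
\[
\phi(1)-\phi(0)=\int_0^1\nabla_p\mu(X,\gamma(t))^\top\dot\gamma(t)\,\mathrm dt .
\]
The left-hand side equals $\mu(X,p)-\mu(X,p_0(X))=\Delta(X,p)$ by \eqref{eq:pointwise_uplift}.

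\textbf{Step 3: reduction to the tangent space.} Since $\gamma(t)\in\Delta^{K-1}$, we have $\mathbf 1^\top\gamma(t)=1$ for all $t$. Differentiating gives $\mathbf 1^\top\dot\gamma(t)=0$ almost everywhere, so $\dot\gamma(t)\in\mathcal T$. The projector $\Pi_{\mathcal T}$ from \eqref{eq:tangent_projection} is symmetric and acts as the identity on $\mathcal T$, hence
\[
\nabla_p\mu^\top\dot\gamma=\nabla_p\mu^\top\Pi_{\mathcal T}\dot\gamma=(\Pi_{\mathcal T}\nabla_p\mu)^\top\dot\gamma=g^\star(X,\gamma(t))^\top\dot\gamma(t).
\]
Substituting this into Step 2 gives \eqref{eq:uplift_integral}.

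\textbf{Main obstacle.} The delicate point is regularity. The hypothesis ``continuously differentiable along $\gamma$'' must be read as $\mu(X,\cdot)$ being $C^1$ on a neighbourhood of $\gamma([0,1])$, relative to the affine hull of the simplex, so that the gradient is well defined; at boundary points of the simplex one uses one-sided derivatives, consistent with the $0^+$ in \eqref{eq:pairwise_reallocation}.

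This is harmless because only the tangential component $g^\star$ enters the formula. That component is determined by $\mu$ restricted to the affine hull, which makes the result independent of how $\mu$ is extended off the simplex.

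If only differentiability of $\phi$ is available, rather than $C^1$ in a neighbourhood, I would instead assume $\nabla_p\mu$ is continuous and bounded on $\gamma([0,1])$. I would then verify directly that $\phi$ is absolutely continuous using the mean value inequality on small segments. The integrability of the integrand follows from the boundedness of $\nabla_p\mu$ together with $\dot\gamma\in L^1$.
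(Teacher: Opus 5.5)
Your proposal is correct and follows the same route as the paper's proof. Both apply the fundamental theorem of calculus to $t\mapsto\mu(X,\gamma(t))$ and then use $\mathbf 1^\top\dot\gamma(t)=0$ a.e.\ to replace $\nabla_p\mu$ by $\Pi_{\mathcal T}\nabla_p\mu=g^\star$; your Steps 1--2 make explicit the absolute continuity and chain-rule details that the paper takes for granted.
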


Finite business value is the accumulated marginal return along the allocation moves that the system actually executes. ReAlloc therefore concentrates estimation capacity on supported local traces rather than reconstructing a globally response surface. Crucially, this integral equivalence motivates us to parameterize the student as a scalar potential function, ensuring that its utility difference between allocations exactly equals the path integral of local returns, thus serving as a direct and reliable proxy for the true uplift. The proof is given in Appendix~\ref{app:proof_simplex_integral}.

\subsection{Factual Fit Does Not Certify a Policy}
\label{subsec:theory_pto}

\begin{proposition}[Factual risk does not control decision quality]
\label{prop:pto_failure}
There exist smooth response surfaces and PTO predictors with zero factual risk under the logging distribution but constant decision regret. Moreover, there exists a sequence $\widehat\mu_n$ such that
$\|\widehat\mu_n-\mu\|_{L_2}\to0$ while
\(
    \left\|
      \Pi_{\mathcal T}\nabla_p\widehat\mu_n
      -\Pi_{\mathcal T}\nabla_p\mu
    \right\|_\infty\to\infty.
\)
\end{proposition}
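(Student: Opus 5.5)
Both claims are proved by explicit construction; no earlier result beyond the definitions of $\Pi_{\mathcal T}$, $\Delta^{K-1}$ and the PTO pipeline \eqref{eq:pto_fit}--\eqref{eq:pto_pipeline} is needed.

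\textbf{Part one (zero factual risk, constant regret).} Take $K=2$ and parameterize $\Delta^{1}$ by $t\in[0,1]$, with $p=(t,1-t)$. Let $X$ be degenerate, or let it carry no information. Let the logging policy put its action distribution on the subinterval $S=[0,1/3]$, for instance uniform there. Set the true response to $\mu(t)=t$, so the optimal allocation is $t=1$. Define the predictor
\[
\widehat\mu(t)=t-c\,\phi(t),
\]
where $\phi$ is a smooth bump that vanishes on $[0,1/2]$ and equals $1$ near $t=1$, and $c$ is a constant with $c>2$. Then $\widehat\mu=\mu$ on $S$, so the factual risk $\mathbb E_{P\sim\text{log}}\,\ell(Y,\widehat\mu(P))$ equals the true risk under the logging distribution; this is the Bayes risk, which is zero in the noiseless case.

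Next check where PTO lands. Choose $c$ large enough that $\max_t\widehat\mu(t)$ is attained at some $t\le 1/2+\eta$, where $\widehat\mu(t)\le 1/2+\eta$. This works because on $[1/2,1]$ we have $\widehat\mu\le 1-c\phi$, and the bump can be steep; a simpler route is to let $\widehat\mu$ dip below $0$ after $t=1/2$. PTO then selects $t\le 1/2+\eta$, so the regret $\mu(1)-\mu(\hat t)\ge 1/2-\eta$. This is a constant that does not shrink with $n$, since the data never leave $S$. The same construction embeds in any $K$ by acting along $e_1-e_2$.

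\textbf{Part two (vanishing $L_2$ error, exploding gradient).} Set
\[
\widehat\mu_n(X,p)=\mu(X,p)+n^{-1/2}\sin\!\bigl(n\,v^\top p\bigr),\qquad v=e_1-e_2\in\mathcal T.
\]
This is smooth, and $\|\widehat\mu_n-\mu\|_{L_2}\le n^{-1/2}\to0$ under any distribution. Its gradient differs from that of $\mu$ by $n^{1/2}\cos(n v^\top p)\,v$. Since $v\in\mathcal T$, the projection $\Pi_{\mathcal T}$ fixes $v$, so the sup norm of the projected gradient error is $n^{1/2}\|v\|_\infty\sup|\cos|=n^{1/2}\to\infty$. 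The cosine does reach $\pm1$, because $v^\top p$ ranges over $[-1,1]$ on the simplex and $n\cdot[-1,1]$ contains a full period once $n\ge\pi$.

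\textbf{Main obstacle.} The delicate step is making "zero factual risk" precise in part one. With noisy $Y$ and squared loss, "zero" should be read as zero excess risk over the Bayes risk; I would either assume noiseless outcomes or state the result as excess risk. I would also make sure that the logging support $S$ genuinely excludes a neighborhood of the optimum, which is exactly the positivity failure the construction exploits.
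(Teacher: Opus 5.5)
Your proposal is correct and follows essentially the paper's route: both parts are explicit counterexamples on the two-channel simplex. Your second part is the paper's $\widehat\mu_n(p)=n^{-1/2}\sin(np)$ construction lifted to general $K$ and general $\mu$ along $v=e_1-e_2\in\mathcal T$; your first part uses the same absence of off-support information (the paper takes $P=0$ a.s., $\mu(p)=-cp$, $\widehat\mu(p)=Mp$, so PTO jumps to the opposite vertex with regret $c$), except that your predictor is pessimistic rather than optimistic off-support. One simplification: you need not tune $c$ to push the maximizer below $1/2+\eta$ (which would also require $\phi$ to be bounded below on $[1/2+\eta,1]$), because $c>2$ already gives $\widehat\mu\le 1-c<-1<\widehat\mu(1/2)$ wherever $\phi\equiv1$ near $t=1$, so every PTO maximizer stays a fixed distance from the optimum and the regret is a positive constant.
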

The constructions are given in Appendix~\ref{app:proof_pto_failure}. Let $m(X,p):=\mathbb E[Y\mid X,P=p]$. A factual response model targets the
observational field
\begin{align}
    g_{\mathrm{obs}}(X,p)
    &:=\Pi_{\mathcal T}\nabla_p m(X,p)
      =g^\star(X,p)+b_{\mathrm{sc}}(X,p),
      \label{eq:pto_observational_field}\\
    b_{\mathrm{sc}}(X,p)
    &:=\Pi_{\mathcal T}\nabla_p\{m(X,p)-\mu(X,p)\}.
      \label{eq:pto_gradient_decomposition}
\end{align}
The exact shortcut bias decomposition is given in Appendix~\ref{app:proof_shortcut}. It vanishes under conditional ignorability; orthogonalization controls nuisance-estimation sensitivity after identification, but does not recover omitted confounders.
Offline RMSE and calibration are not launch certificates for an allocation policy. An optimizer can amplify small slope errors, and observational selection can make a historically favored channel appear incrementally valuable. Predictive fit, local directional accuracy, support coverage, and policy value must therefore be validated separately.

\subsection{Orthogonal Estimation of Supported Local Effects}
\label{subsec:theory_teacher}

Let $d=K-1$ and let $Q\in\mathbb R^{K\times d}$ be an orthonormal basis of $\mathcal T$. With $z=Q^\top p$, $Z=Q^\top P$, and
$\widetilde Z=Z-z$, let $\mathbb E_{h,z}[\cdot\mid X]$ denote kernel localization around $z$. Define
\begin{equation}
    e_{h,z}(X):=\mathbb E_{h,z}[\widetilde Z\mid X],
    \qquad
    m_{h,z}(X):=\mathbb E_{h,z}[Y\mid X].
    \label{eq:local_nuisances}
\end{equation}
The orthogonal component of the teacher is characterized by
\begin{align}
    \Psi_{h,z}(\beta,m,e;X)
    :=\mathbb E_{h,z}\!\Big[&\{\widetilde Z-e(X)\}
      \{Y-m(X)\nonumber\\[-1mm]
      &-\beta(X,z)^\top(\widetilde Z-e(X))\}\,\Big|\,X\Big].
    \label{eq:orthogonal_moment}
\end{align}

\begin{theorem}[Orthogonal local-field estimation]
\label{thm:teacher_identification}
Under Assumptions~\ref{ass:identification}--\ref{ass:estimation}, the
population root $\beta_h^\dagger$ of~\eqref{eq:orthogonal_moment} satisfies
\begin{equation}
    \sup_{(X,p)\in\mathcal R}
    \left\|Q\beta_h^\dagger(X,Q^\top p)-g^\star(X,p)\right\|_2
    \le\epsilon_{\mathrm{loc}}(h),
    \label{eq:population_teacher_bias}
\end{equation}
and the score is Neyman orthogonal with respect to $(m,e)$. Let
$\widehat g_T^{\mathrm{orth}}:=Q\widehat\beta$ be the cross-fitted empirical
root and let $\widehat g_T$ be the field returned by the implemented composite
teacher. If
\begin{equation}
    \|\widehat g_T-\widehat g_T^{\mathrm{orth}}\|_{\infty,\mathcal R}
    \le\epsilon_{\mathrm{comp}},
    \label{eq:composite_bridge}
\end{equation}
then, with probability at least $1-\delta$,
\begin{align}
    \|\widehat g_T-g^\star\|_{\infty,\mathcal R}
    &\le\epsilon_T(n,h,\delta)+\epsilon_{\mathrm{comp}},
    \label{eq:teacher_error}\\
    \epsilon_T(n,h,\delta)
    &:=\epsilon_{\mathrm{loc}}(h)
      +C\sqrt{\frac{\mathfrak C_{\mathcal G}+\log(1/\delta)}
                     {nh^{K+1}}}
      +C r_{\mathrm{orth}}.
    \label{eq:teacher_rate}
\end{align}
\end{theorem}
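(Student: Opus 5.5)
The plan has three parts: identify the population root, verify Neyman orthogonality, and then bound the empirical error by a standard local-DML decomposition.

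\textbf{Identification.} Under the identification assumption (conditional ignorability, so $m(X,p)=\mu(X,p)$ on $\mathcal R$), I Taylor-expand $\mu(X,\cdot)$ around $p$ inside the kernel window. In tangent coordinates this reads
\[
Y=\mu(X,p)+\nabla_z\tilde\mu(X,z)^\top\widetilde Z+\rho_h+\varepsilon,
\]
where $\tilde\mu(X,z)=\mu(X,Qz+\mathbf 1/K)$, the noise satisfies $\mathbb E[\varepsilon\mid X,P]=0$, and the remainder is $|\rho_h|\le C h^{2}$ whenever $\mu(X,\cdot)$ is $C^{2}$ (or of Hölder order $\alpha$) on the supported region.

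The moment \eqref{eq:orthogonal_moment} is the normal equation of a localized residual-on-residual regression. Its root is therefore
\[
\beta_h^\dagger=\Sigma_h^{-1}\,\mathbb E_{h,z}\bigl[(\widetilde Z-e_{h,z})(Y-m_{h,z})\mid X\bigr],
\qquad
\Sigma_h:=\mathrm{Cov}_{h,z}(\widetilde Z\mid X).
\]
Substituting the expansion gives $\beta_h^\dagger=\nabla_z\tilde\mu+\Sigma_h^{-1}\mathrm{Cov}_{h,z}(\widetilde Z,\rho_h)$.

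The overlap/support assumption provides $\Sigma_h\succeq c h^2 I$ uniformly on $\mathcal R$. Together with $|\widetilde Z|\lesssim h$, the bias term is $O(h^{-2}\cdot h\cdot h^2)=O(h)$, and this defines $\epsilon_{\mathrm{loc}}(h)$.

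Finally, since $Q$ is an orthonormal basis of $\mathcal T$, we have $Q\nabla_z\tilde\mu=QQ^\top\nabla_p\mu=\Pi_{\mathcal T}\nabla_p\mu=g^\star$. This yields \eqref{eq:population_teacher_bias}.

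\textbf{Neyman orthogonality.} I take Gateaux derivatives of $\Psi_{h,z}$ at $(\beta^\dagger,m_{h,z},e_{h,z})$.
\begin{itemize}
\item In the direction $m\mapsto m+t\,\delta m$, the derivative is $-\mathbb E_{h,z}[\widetilde Z-e_{h,z}\mid X]\,\delta m(X)=0$, because $e_{h,z}$ is the localized conditional mean.
\item In the direction $e\mapsto e+t\,\delta e$, the derivative is $-\delta e\,\mathbb E_{h,z}[Y-m-\beta^{\dagger\top}(\widetilde Z-e)\mid X]+\mathbb E_{h,z}[(\widetilde Z-e)\mid X]\,\beta^{\dagger\top}\delta e$. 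Both pieces vanish by the definitions of $m_{h,z}$ and $e_{h,z}$.
\end{itemize}
This is the standard partially linear Robinson argument, applied conditionally within the kernel window.

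\textbf{Empirical error.} With cross-fitting, I decompose $\widehat\beta-\beta_h^\dagger$ into three pieces.
\begin{enumerate}
\item A linearized empirical-process term, $\widehat\Sigma_h^{-1}(\mathbb P_n-\mathbb P)\Psi$.
\item A second-order nuisance term, bounded by $\|\hat m-m\|\,\|\hat e-e\|+\|\hat e-e\|^2$ divided by the localized curvature. By orthogonality this is absorbed into $r_{\mathrm{orth}}$ under the estimation assumption.
\item A term from the Jacobian perturbation $\widehat\Sigma_h-\Sigma_h$. It is handled by the same concentration bound together with the eigenvalue lower bound.
\end{enumerate}

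For the empirical-process term, the kernel weights scale like $h^{-(K-1)}$ and the inverse covariance contributes $h^{-2}$. These give an effective sample size of order $nh^{K+1}$. A uniform deviation bound over the class $\mathcal G$ with complexity $\mathfrak C_{\mathcal G}$ (via Talagrand/Bousquet, or chaining plus a union bound), applied to data that are independent of the nuisance fold because of cross-fitting, gives the $\sqrt{(\mathfrak C_{\mathcal G}+\log(1/\delta))/(nh^{K+1})}$ term with probability at least $1-\delta$. Multiplying by the orthonormal $Q$ preserves the Euclidean norm.

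Adding the three pieces with the population bias and then applying the triangle inequality with \eqref{eq:composite_bridge} yields \eqref{eq:teacher_error}--\eqref{eq:teacher_rate}.

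\textbf{Main obstacle.} I expect the hardest step to be obtaining the rate \emph{uniformly} over $\mathcal R$ with the exact exponent $h^{K+1}$. This requires the lower bound on $\lambda_{\min}(\Sigma_h)/h^2$ to hold uniformly in $(X,p)$ near the simplex boundary, and requires controlling the random matrix inverse. My first attempt will be to impose this bound as part of the support assumption defining $\mathcal R$, and to use a matrix Bernstein inequality on $\widehat\Sigma_h$ over a covering net of $\mathcal R$.
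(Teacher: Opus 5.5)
Your proposal is correct and follows essentially the paper's proof. The shared steps are the local linearization giving $\beta_h^\dagger-\beta^\star=\Sigma_{h,z}^{-1}C_{h,z}$, the Gateaux-derivative check of orthogonality with the exact second-order remainder $\delta_e\{\delta_m-\beta_h^{\dagger\top}\delta_e\}$ normalized into $r_{\mathrm{orth}}$, the concentration-over-curvature step (the factor $h$ from $\widetilde Z-e_{h,z}=O(h)$ against $\Sigma_{h,z}^{-1}\sim h^{-2}$ is what yields $nh^{K+1}$), and the closing triangle inequality with \eqref{eq:composite_bridge}. The only difference is that you re-derive from primitive conditions ($C^2$ Taylor bounds, Talagrand/matrix-Bernstein over a net) what the paper takes directly from Assumptions~\ref{ass:smoothness} and~\ref{ass:estimation}, namely $\|\Sigma_{h,z}^{-1}C_{h,z}\|_2\le\epsilon_{\mathrm{loc}}(h)$, uniform score concentration, and empirical curvature $\ge\kappa h^2/2$; so your anticipated main obstacle is already assumed, and no Jacobian-perturbation term is needed because linearity of the empirical score in $\beta$ gives $\widehat\beta-\beta_h^\dagger=\widehat\Sigma^{-1}\widehat\Psi_{h,z}(\beta_h^\dagger,\widehat m_{h,z},\widehat e_{h,z};X)$ up to root-solving error.
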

The normalized second-order remainder $r_{\mathrm{orth}}$ and the proof are given in Appendix~\ref{app:proof_teacher_identification}. 
The rate exposes the production trade-off. A smaller neighborhood reduces local approximation bias but also reduces effective sample size; the $h^{K+1}$ term makes channel dimensionality an explicit data requirement. Orthogonality allows the outcome and logging models to be improved modularly, while $\epsilon_{\mathrm{comp}}$ records approximation and optimization error from the full neural objective.

\subsection{Potential Accuracy Controls Regret}
\label{subsec:theory_regret}
\label{subsec:theory_safe}

The deployed student is parameterized by a scalar potential:
\(\widehat g(X,p) :=\Pi_{\mathcal T}\nabla_p s_\theta(X,p)\), \(\widehat\Delta_\theta(X,p) :=s_\theta(X,p)-s_\theta(X,p_0(X))\).
For $p\in\mathcal C(X)$, let $\mathfrak P_X(p)$ denote the set of support-admissible paths from $p_0(X)$ to $p$, and define the reachable action set:
\begin{equation}
    \mathcal A_{L_{\max}}(X)
    :=\left\{p:\exists\gamma\in\mathfrak P_X(p),\ L(\gamma)\le L_{\max}\right\}.
    \label{eq:reachable_set}
\end{equation}
Let
$\pi_{L_{\max}}^\star(X)\in
\arg\max_{p\in\mathcal A_{L_{\max}}(X)}\mu(X,p)$ and define
\begin{equation}
    \operatorname{Reg}_{L_{\max}}(\widehat\pi)
    :=\mathbb E_X\!\left[
      \mu(X,\pi_{L_{\max}}^\star(X))
      -\mu(X,\widehat\pi(X))
    \right].
    \label{eq:trace_regret}
\end{equation}
Assume the local search returns $\widehat\pi(X)\in\mathcal A_{L_{\max}}(X)$
and satisfies
\begin{equation}
    \mathbb E_X\!\left[
      \sup_{p\in\mathcal A_{L_{\max}}(X)}
      \widehat\Delta_\theta(X,p)
      -\widehat\Delta_\theta(X,\widehat\pi(X))
    \right]
    \le\epsilon_{\mathrm{search}}.
    \label{eq:search_error}
\end{equation}

\begin{theorem}[Trace-wise field error controls value and regret]
\label{thm:field_regret}
Suppose
\(
    \|\widehat g-g^\star\|_{\infty,\mathcal R}\le\epsilon_g.
\)
For any $p\in\mathcal A_{L_{\max}}(X)$ and any
$\gamma\in\mathfrak P_X(p)$,
\begin{align}
    \widehat\Delta_\theta(X,p)
    &=\int_0^1
      \widehat g(X,\gamma(t))^\top\dot\gamma(t)\,\mathrm dt,
    \label{eq:potential_line_integral}\\
    \left|\widehat\Delta_\theta(X,p)-\Delta(X,p)\right|
    &\le L(\gamma)\epsilon_g.
    \label{eq:uplift_error_bound}
\end{align}
Consequently,
\begin{equation}
    \operatorname{Reg}_{L_{\max}}(\widehat\pi)
    \le2L_{\max}\epsilon_g+\epsilon_{\mathrm{search}}.
    \label{eq:field_regret_bound}
\end{equation}
\end{theorem}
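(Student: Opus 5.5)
The argument has three steps: an exact line-integral identity for the student potential, a trace-wise comparison with Theorem~\ref{thm:simplex_integral}, and a standard three-term regret decomposition.

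\textbf{Step 1: the line-integral identity.} Fix $X$, $p\in\mathcal A_{L_{\max}}(X)$ and $\gamma\in\mathfrak P_X(p)$. By the chain rule for absolutely continuous paths (assuming $s_\theta(X,\cdot)$ is $C^1$ on a neighbourhood of the trace),
\[
s_\theta(X,p)-s_\theta(X,p_0(X))=\int_0^1\nabla_p s_\theta(X,\gamma(t))^\top\dot\gamma(t)\,\mathrm dt .
\]
Since $\gamma(t)\in\Delta^{K-1}$ for all $t$, we have $\mathbf 1^\top\gamma(t)=1$. Hence $\mathbf 1^\top\dot\gamma(t)=0$ almost everywhere, so $\dot\gamma(t)\in\mathcal T$. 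Because $\Pi_{\mathcal T}$ is the symmetric orthogonal projector onto $\mathcal T$,
\[
\nabla_p s_\theta^\top\dot\gamma=(\Pi_{\mathcal T}\nabla_p s_\theta)^\top\dot\gamma=\widehat g^\top\dot\gamma,
\]
and this gives \eqref{eq:potential_line_integral}. The same argument applied to $\mu$ is exactly Theorem~\ref{thm:simplex_integral}, which we take as given.

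\textbf{Step 2: the pointwise uplift bound.} Subtract the two representations along the same path:
\[
\widehat\Delta_\theta(X,p)-\Delta(X,p)=\int_0^1(\widehat g-g^\star)(X,\gamma(t))^\top\dot\gamma(t)\,\mathrm dt .
\]
Admissibility gives $\gamma(t)\in\mathcal C(X)$, so $(X,\gamma(t))\in\mathcal R$ and the sup-norm bound $\epsilon_g$ applies at every $t$. By Cauchy--Schwarz the integrand is at most $\epsilon_g\|\dot\gamma(t)\|_2$ in absolute value, and integrating yields $L(\gamma)\epsilon_g$, which is \eqref{eq:uplift_error_bound}.

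\textbf{Step 3: the regret bound.} Since $p\in\mathcal A_{L_{\max}}(X)$, we may choose $\gamma$ with $L(\gamma)\le L_{\max}$, so $|\widehat\Delta_\theta-\Delta|\le L_{\max}\epsilon_g$ uniformly on $\mathcal A_{L_{\max}}(X)$. Write $p^\star=\pi^\star_{L_{\max}}(X)$ and $\hat p=\widehat\pi(X)$. The baseline $\mu(X,p_0)$ cancels, so
\[
\mu(X,p^\star)-\mu(X,\hat p)=\Delta(X,p^\star)-\Delta(X,\hat p),
\]
which we decompose as
\[
[\Delta(p^\star)-\widehat\Delta_\theta(p^\star)]+[\widehat\Delta_\theta(p^\star)-\widehat\Delta_\theta(\hat p)]+[\widehat\Delta_\theta(\hat p)-\Delta(\hat p)].
\]
The first and third brackets are each bounded by $L_{\max}\epsilon_g$; this uses that both $p^\star$ and $\hat p$ lie in $\mathcal A_{L_{\max}}(X)$, the latter by assumption. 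The middle bracket is at most $\sup_{p\in\mathcal A_{L_{\max}}}\widehat\Delta_\theta(X,p)-\widehat\Delta_\theta(X,\hat p)$. Taking $\mathbb E_X$ and applying \eqref{eq:search_error} gives \eqref{eq:field_regret_bound}.

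\textbf{Main obstacle: regularity, not computation.} The delicate points are measure-theoretic. We need $\dot\gamma\in\mathcal T$ almost everywhere, and the chain rule must hold for absolutely continuous $\gamma$ composed with a $C^1$ function. The sup over $\mathcal R$ must cover the whole trace, which is exactly why support-admissibility is built into $\mathfrak P_X(p)$. Measurability of $X\mapsto\sup_{\mathcal A}\widehat\Delta_\theta$ and existence of $\pi^\star_{L_{\max}}$ we take as implicit in the definitions, along with enough integrability for the expectations to be finite.
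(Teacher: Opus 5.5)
Your proof is correct and follows the paper's argument essentially step for step. You obtain the potential line-integral identity from $\dot\gamma\in\mathcal T$, apply Cauchy--Schwarz against Theorem~\ref{thm:simplex_integral} along a support-admissible path, and bound the regret by adding and subtracting $\widehat\Delta_\theta$ before invoking the search-error assumption. You also state explicitly two points the paper leaves implicit: $\widehat\pi(X)\in\mathcal A_{L_{\max}}(X)$ is needed for the third bracket, and support-admissibility is what keeps the whole trace inside $\mathcal R$.
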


\begin{corollary}[End-to-end teacher--student regret]
\label{cor:teacher_student_regret}
Under Assumption~\ref{ass:approximation} and the event in
Theorem~\ref{thm:teacher_identification},
\begin{equation}
    \operatorname{Reg}_{L_{\max}}(\widehat\pi)
    \le
    2L_{\max}
    \{\epsilon_T(n,h,\delta)+\epsilon_{\mathrm{comp}}+\epsilon_S\}
    +\epsilon_{\mathrm{search}}.
    \label{eq:teacher_student_regret}
\end{equation}
\end{corollary}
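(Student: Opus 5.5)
The plan is to derive the corollary directly from Theorem~\ref{thm:field_regret}. That theorem only needs a uniform bound $\|\widehat g-g^\star\|_{\infty,\mathcal R}\le\epsilon_g$ on the student field over the supported region. It then delivers $\operatorname{Reg}_{L_{\max}}(\widehat\pi)\le 2L_{\max}\epsilon_g+\epsilon_{\mathrm{search}}$. So the whole task is to produce such an $\epsilon_g$ by chaining the student's distillation error with the teacher's estimation error.

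The key step is a triangle inequality on $\mathcal R$:
\begin{equation*}
\|\widehat g-g^\star\|_{\infty,\mathcal R}\le \|\widehat g-\widehat g_T\|_{\infty,\mathcal R}+\|\widehat g_T-g^\star\|_{\infty,\mathcal R}.
\end{equation*}
\begin{itemize}
\item For the first term, I will read Assumption~\ref{ass:approximation} as the statement that the distilled potential student tracks the teacher field uniformly on the supported region. Concretely, this means $\|\Pi_{\mathcal T}\nabla_p s_\theta-\widehat g_T\|_{\infty,\mathcal R}\le\epsilon_S$. It covers the approximation, optimization and EMA error of Stage II.
\item For the second term, on the probability-$(1-\delta)$ event of Theorem~\ref{thm:teacher_identification}, together with the composite bridge \eqref{eq:composite_bridge}, we have $\|\widehat g_T-g^\star\|_{\infty,\mathcal R}\le\epsilon_T(n,h,\delta)+\epsilon_{\mathrm{comp}}$.
\end{itemize}
Combining the two bullets on that event gives
\begin{equation*}
\epsilon_g=\epsilon_T(n,h,\delta)+\epsilon_{\mathrm{comp}}+\epsilon_S.
\end{equation*}

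It remains to check that the hypotheses of Theorem~\ref{thm:field_regret} hold. Two are inherited from the statement's setup: the search returns a point in $\mathcal A_{L_{\max}}(X)$ and satisfies \eqref{eq:search_error}. The third is that $\widehat g$ is the projected gradient of the scalar potential $s_\theta$, which holds by construction. Substituting $\epsilon_g$ into \eqref{eq:field_regret_bound} then yields \eqref{eq:teacher_student_regret}, holding on the same event and hence with probability at least $1-\delta$.

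The main obstacle is aligning the domains on which the two errors are measured. The teacher bound is stated on $\mathcal R$, and the regret argument integrates $\widehat g-g^\star$ along support-admissible paths, which stay inside $\mathcal C(X)$. So everything must be controlled on the same set $\mathcal R$. If Assumption~\ref{ass:approximation} instead controls the student only at replay-buffer points, or in a finite-difference (pairwise) sense, I will first upgrade it to a uniform bound on $\mathcal R$. I would try to do this via the Jacobian-matching term of $\mathcal L_{\mathrm{student}}$, together with Lipschitz continuity of the fields, with any residual absorbed into $\epsilon_S$.
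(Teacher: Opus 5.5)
Your proposal is correct and matches the paper's proof. The paper likewise bounds $\|\widehat g-g^\star\|_{\infty,\mathcal R}\le\epsilon_S+\epsilon_{\mathrm{comp}}+\epsilon_T(n,h,\delta)$ by the triangle inequality through $\widehat g_T$, using Assumption~\ref{ass:approximation} and the event of Theorem~\ref{thm:teacher_identification}, and substitutes this $\epsilon_g$ into \eqref{eq:field_regret_bound}; your worry about upgrading a buffer-point or pairwise bound is unnecessary, because Assumption~\ref{ass:approximation} already states $\|\widehat g-\widehat g_T\|_{\infty,\mathcal R}\le\epsilon_S$ uniformly on $\mathcal R$.
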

The proofs are given in Appendix~\ref{app:proof_field_regret}.
$L_{\max}$ is a deployment blast-radius control: larger cumulative budget movement creates more upside but also amplifies field error. The term $\epsilon_{\mathrm{search}}$ captures the value lost to finite candidate sets, greedy search, and latency constraints. Because the student is a scalar potential, gains telescope across the actual accepted greedy trace and remain path-consistent.
\section{Synthetic Experiment}
\label{sec:synthetic}

\subsection{Data Generation Process (DGP)}
\label{subsec:synthetic_dgp}

We construct a synthetic environment with known counterfactual outcomes to evaluate \textit{ReAlloc}. The DGP is designed to preserve three core challenges: state-dependent assignment, locally identifiable but globally under-supported actions, and complex channel interactions. Each observation is an item-period tuple $(H_{it},B_{it},P_{it},Y_{it})$, where the allocation $P_{it}\in\Delta^{K-1}$ lies on a simplex with $K=3$ channels.

\noindent\textbf{State-dependent logging.} 
We decompose the state as $H=(C, E, S)$, where $C$ acts as the primary confounder, $E$ modifies the response, and $S$ captures channel preference. To handle compositional allocations $P$, we map the simplex to a vector space using the isometric log-ratio (ILR) transform. The logging policy is specifically designed to induce three critical properties:
(1) \textbf{State-dependent confounding}, controlled by a parameter $\alpha_{\mathrm{cf}}$, which couples the primary confounder $C$ with the allocation; 
(2) \textbf{Boundary skewness}, controlled by $\alpha_{\mathrm{bd}}$, which pushes the propensity mean toward simplex boundaries to mimic extreme budget skews; and 
(3) \textbf{Strict overlap}, achieved via a logistic-normal mixture noise that balances local exploitation (low variance) with sparse global exploration. 
Crucially, since all confounders are fully observed in $H$, the DGP strictly satisfies conditional ignorability. \textit{(See Appendix~\ref{app:dgp_details}.)}

\noindent\textbf{Joint response surface.}
Outcomes follow $Y = \mu_\star(H,B,P)+\epsilon$ with $\epsilon\sim\mathcal N(0,\sigma_Y^2)$. The causal response is anchored at a state-independent baseline allocation $c(H,B)$. Let $z$ denote the allocation deviation from the baseline in the ILR space. The response surface:
\begin{equation}
\begin{aligned}
    &\mu_\star(H,B,P) = m_0(H,B) 
    \\
    &+ \rho(H,B) \Big[
    \omega_{\mathrm{loc}}g_{\mathrm{loc}}(H,z) 
    + \omega_{\mathrm{int}}g_{\mathrm{int}}(H,z) 
    + \omega_{\mathrm{far}}g_{\mathrm{far}}(H,z) \Big].
\label{eq:dgp_response}
\end{aligned}
\end{equation}

This formulation explicitly disentangles the ROI curve into three regimes:
(1) The \textbf{local term} $g_{\mathrm{loc}}(H,z)$ captures linear marginal returns, providing an identifiable first-order reallocation signal.
(2) The \textbf{interaction term} $g_{\mathrm{int}}$ models cross-channel substitution and complementarity.
(3) The \textbf{far-field term} $g_{\mathrm{far}}$ is flat near the anchor but introduces non-linear saturation and cannibalization for out-of-support (OOS) allocations.
The component scales ($\omega$) are calibrated to ensure local gradients remain learnable from logs, while distant curvature is weakly identified, rigorously testing the model's extrapolation capability.

\noindent\textbf{Temporal variations.}
For the temporal experiment, we keep $\mu_\star$ and all response parameters fixed, and apply a smooth window-specific bias to the logging logits. This rotates the observed action support across channels without introducing sequential treatment effects. Hence, we evaluate the retention of local geometric knowledge under changing support, rather than mechanism drift.

\subsection{Setup}
\label{subsec:synthetic_setup}

We design synthetic experiments to address three questions: \textbf{RQ1 (Safe Policy Improvement):} Under increasingly severe confounding and low overlap, can \textsc{ReAlloc} achieve higher deployable uplift than baselines?
\textbf{RQ2 (Mechanism of Improvement):} Which components drive the performance gains?
\textbf{RQ3 (Fast-Slow Temporal Memory):} Can the slow student effectively accumulate and retain local geometric knowledge from successive teachers?

\noindent\textbf{Baselines.} 
We compare \textsc{ReAlloc} against a comprehensive suite of reference, industry-style, and causal baselines:
(1) \textit{Logging} (maintains historical allocations) and \textit{Uniform} (equal budget split) serve as reference policies. (2) \textit{Additive ROI} models channel responses independently, reflecting the decoupled paradigm in industry.
(3) \textit{Joint S-Learner PTO} fits a global response surface using a standard S-learner and applies global or local optimization.
(4) \textit{R-Learner Local} orthogonally residualizes outcomes and treatments to estimate heterogeneous effects, selecting the optimal locally.

\noindent\textbf{Evaluation Metrics.}
Standard offline policy evaluation often ignores the risk of OOD recommendations. We introduce a fallback mechanism: if a policy $\hat\pi$ recommends an allocation $\hat{p}_i$ that falls outside the historically supported region $\mathcal{S}_i$, the system safely defaults to the factual logging allocation $P_i$. 
Let $\Delta_i(\hat\pi) = \mu_\star(H_i,B_i,\hat p_i) - \mu_\star(H_i,B_i,P_i)$ be the pointwise oracle uplift. We define the \textbf{Deployable Uplift} as our primary metric:
\begin{equation}
\mathrm{Uplift}_{\mathrm{dep}}(\hat\pi)
=
\frac{1}{N}\sum_{i=1}^N
\mathbf{1}\{\hat{p}_i \in \mathcal{S}_i\} \Delta_i(\hat\pi),
\label{eq:deployable_uplift}
\end{equation}
To provide a comprehensive evaluation, we additionally report the \textsc{OOS rate} (the fraction of rejected recommendations), the \textsc{Safe Local Recovery} ratio (comparing $U_{\mathrm{dep}}$ against an oracle), and geometric ranking metrics (\textsc{EdgeNDCG}, \textsc{TopEdgeAcc}, \textsc{TopEdgeRegret}, \textsc{PairwiseCorr}) that evaluate the local directional accuracy independent of policy visitation.

\subsection{Results and Analysis}
\label{sec:sim-results}

\paragraph{RQ1 result.}
Table~\ref{tab:static-main} reports policy quality under decreasing logging overlap. Unconstrained PTO attains high raw uplift but converts little of it into deployable value, whereas support-constrained S-learners recover most of this loss, showing that unsafe global search is a major source of PTO failure. ReAlloc remains the strongest learned policy across all regimes, achieving deployable uplifts of $.97/.79/.71/.57$ and recovering $.88/.86/.84/.83$ of the local oracle opportunity; its margin over the strongest S-NN variant is $.04/.07/.09/.06$. Additive ROI and the orthogonal R-Learner remain feasible but obtain substantially lower value. Notably, the R-Learner slightly outperforms ReAlloc on hard-regime logged-anchor NDCG and regret, yet reaches only $.21$ deployable uplift versus $.71$, indicating that single-step edge quality alone does not determine effective multi-step reallocation.

\begin{table}[t]
\centering
\caption{
\textbf{Static policy quality under decreasing overlap.}
Cells show Deployable uplift / Safe Recovery (D/SR). 
For unconstrained PTO ($_\text{G}$), Raw uplift (R) is shown as superscript.
}
\label{tab:static-main}

\resizebox{\columnwidth}{!}{%
\footnotesize
\setlength{\tabcolsep}{3pt}
\renewcommand{\arraystretch}{1.1}
\begin{tabular}{@{}l cccc ccc@{}}
\toprule
& \multicolumn{4}{c}{Uplift (D/SR)} & \multicolumn{3}{c}{Hard Regime Diagnostics} \\
\cmidrule(lr){2-5} \cmidrule(l){6-8}
Method & Benign & Medium & Hard & Extreme & Bias & NDCG & Regret \\
\midrule
\multicolumn{8}{@{}l}{\textit{References}}\\[-2pt]
Logging & $.00/\text{--}$ & $.00/\text{--}$ & $.00/\text{--}$ & $.00/\text{--}$ & -- & -- & -- \\
Uniform & $-.01/\text{--}$ & $-.03/\text{--}$ & $-.03/\text{--}$ & $-.01/\text{--}$ & -- & -- & -- \\
Add.\ ROI & $.22/.20$ & $.21/.22$ & $.18/.22$ & $.16/.23$ & $-.02$ & $.89$ & $.78$ \\

\midrule
\multicolumn{8}{@{}l}{\textit{Unconstrained global PTO}}\\[-2pt]
S-GBDT$_\text{G}$ & $\mathbf{.08}^{1.0}/.07$ & $\mathbf{.03}^{.8}/.03$ & $\mathbf{.01}^{.6}/.01$ & $\mathbf{.02}^{.7}/.02$ & $+.12$ & $.70$ & $2.62$ \\
S-NN$_\text{G}$ & $\mathbf{.02}^{1.2}/.02$ & $\mathbf{.01}^{1.1}/.01$ & $\mathbf{.01}^{1.0}/.01$ & $\mathbf{.00}^{1.0}/.00$ & $-.15$ & $.89$ & $.70$ \\

\midrule
\multicolumn{8}{@{}l}{\textit{Support-constrained PTO}}\\[-2pt]
S-GBDT$_\text{C}$ & $.75/.68$ & $.46/.50$ & $.32/.39$ & $.36/.52$ & $+.03$ & $.70$ & $2.62$ \\
S-NN$_\text{C}$ & $.93/.84$ & $.72/.78$ & $.62/.74$ & $.50/.73$ & $-.12$ & $.89$ & $.70$ \\

\midrule
\multicolumn{8}{@{}l}{\textit{Shared local-support search}}\\[-2pt]
S-GBDT$_\text{L}$ & $.01/.01$ & $.01/.01$ & $.00/.00$ & $.00/.01$ & $+.00$ & $.70$ & $2.62$ \\
S-NN$_\text{L}$ & $.91/.82$ & $.71/.76$ & $.61/.73$ & $.51/.74$ & $-.11$ & $.89$ & $.70$ \\
R-Learner$_\text{L}$ & $.24/.21$ & $.22/.24$ & $.21/.25$ & $.18/.27$ & $-.03$ & $\mathbf{.92}$ & $\mathbf{.49}$ \\

\textbf{ReAlloc} & $\mathbf{.97}/.88$ & $\mathbf{.79}/.86$ & $\mathbf{.71}/.84$ & $\mathbf{.57}/.83$ & $+.05$ & $\underline{.92}$ & $\underline{.50}$ \\

\midrule
Oracle Local & $1.10/1.00$ & $.92/1.00$ & $.83/1.00$ & $.68/1.00$ & -- & -- & -- \\
\bottomrule
\end{tabular}%
} 
\vspace{2pt}
{\scriptsize Subscripts: G=global, C=constrained, L=local search.}
\end{table}

Figure~\ref{fig:exp1-static-stress} complements the aggregate results. For a fixed context and oracle response surface, increasing assignment severity concentrates the conditional logged actions, isolating support deterioration from changes in the underlying outcome function. The best-edge map further shows that the preferred local transfer depends on the current simplex position. The trajectory comparison illustrates how global PTO moves toward a distant unsupported action, whereas the shared local-support policies remain feasible and follow distinct reallocation paths.

\begin{figure}[t]
    \centering
    \captionsetup[subfigure]{
        font=scriptsize,
        labelfont=bf,
        justification=centering
    }
    \begin{subfigure}[t]{0.235\linewidth}
        \centering
        \includegraphics[width=\linewidth]
        {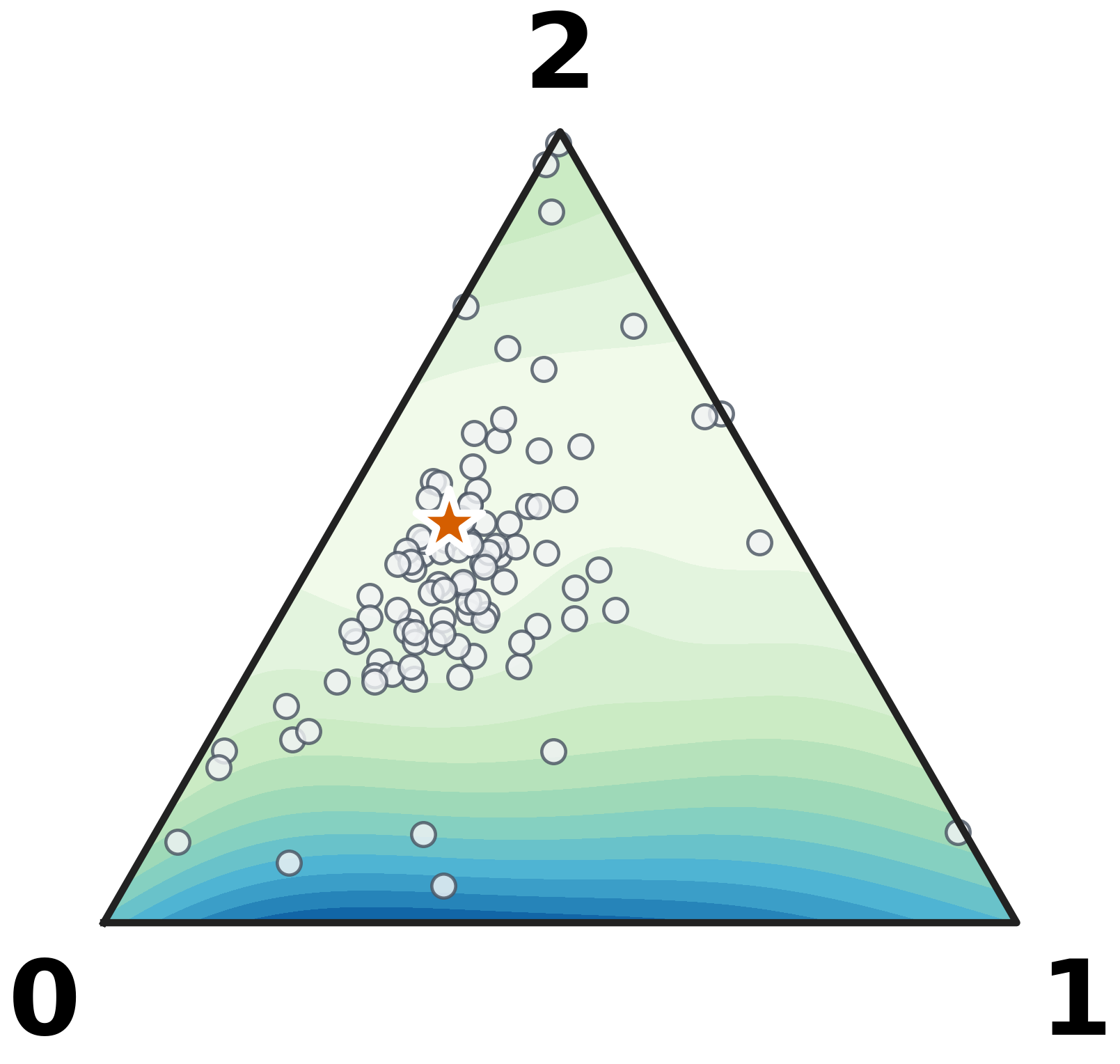}
        \caption{Benign}
        \label{fig:exp1-surface-benign}
    \end{subfigure}
    \hfill
    \begin{subfigure}[t]{0.235\linewidth}
        \centering
        \includegraphics[width=\linewidth]
        {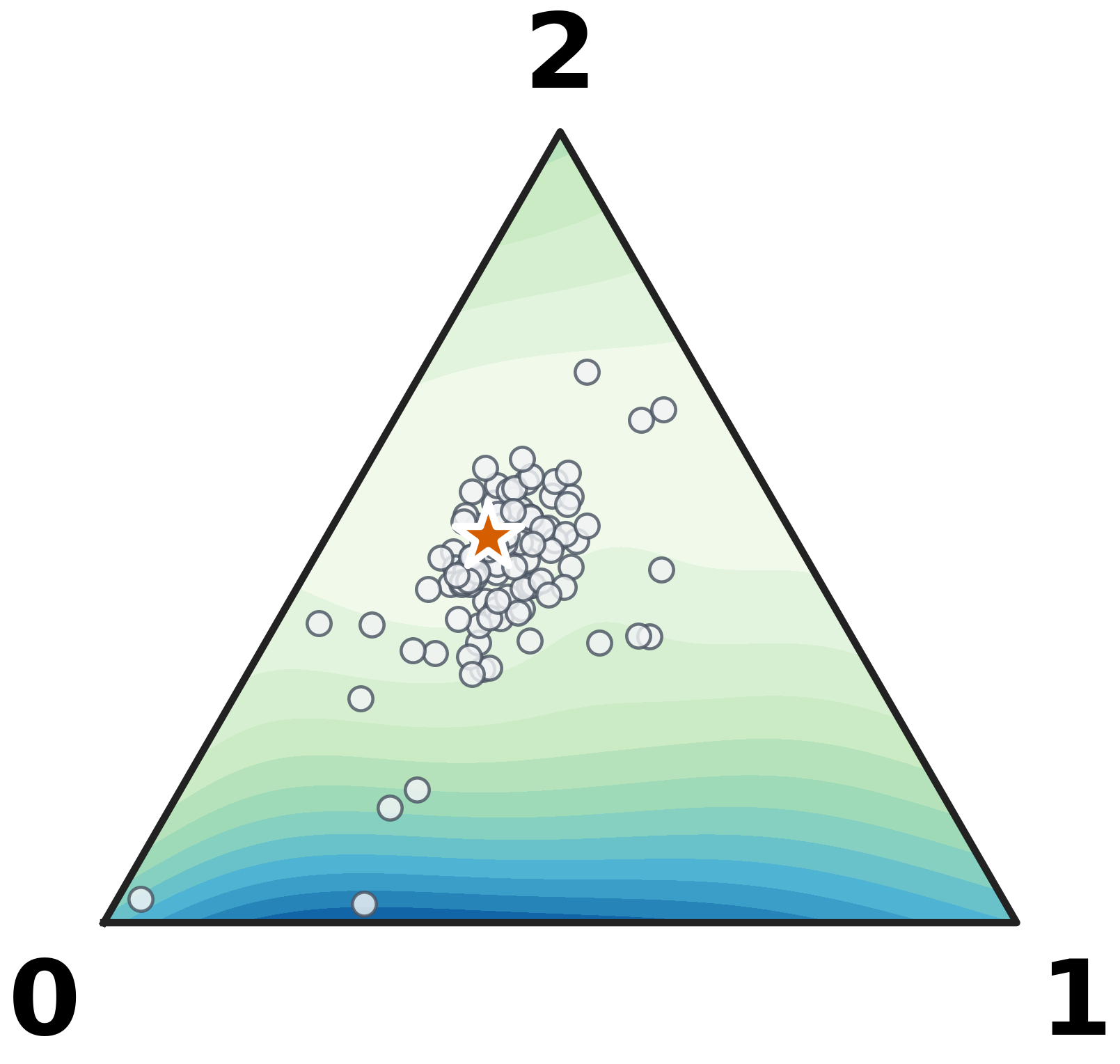}
        \caption{Medium}
        \label{fig:exp1-surface-medium}
    \end{subfigure}
    \hfill
    \begin{subfigure}[t]{0.235\linewidth}
        \centering
        \includegraphics[width=\linewidth]
        {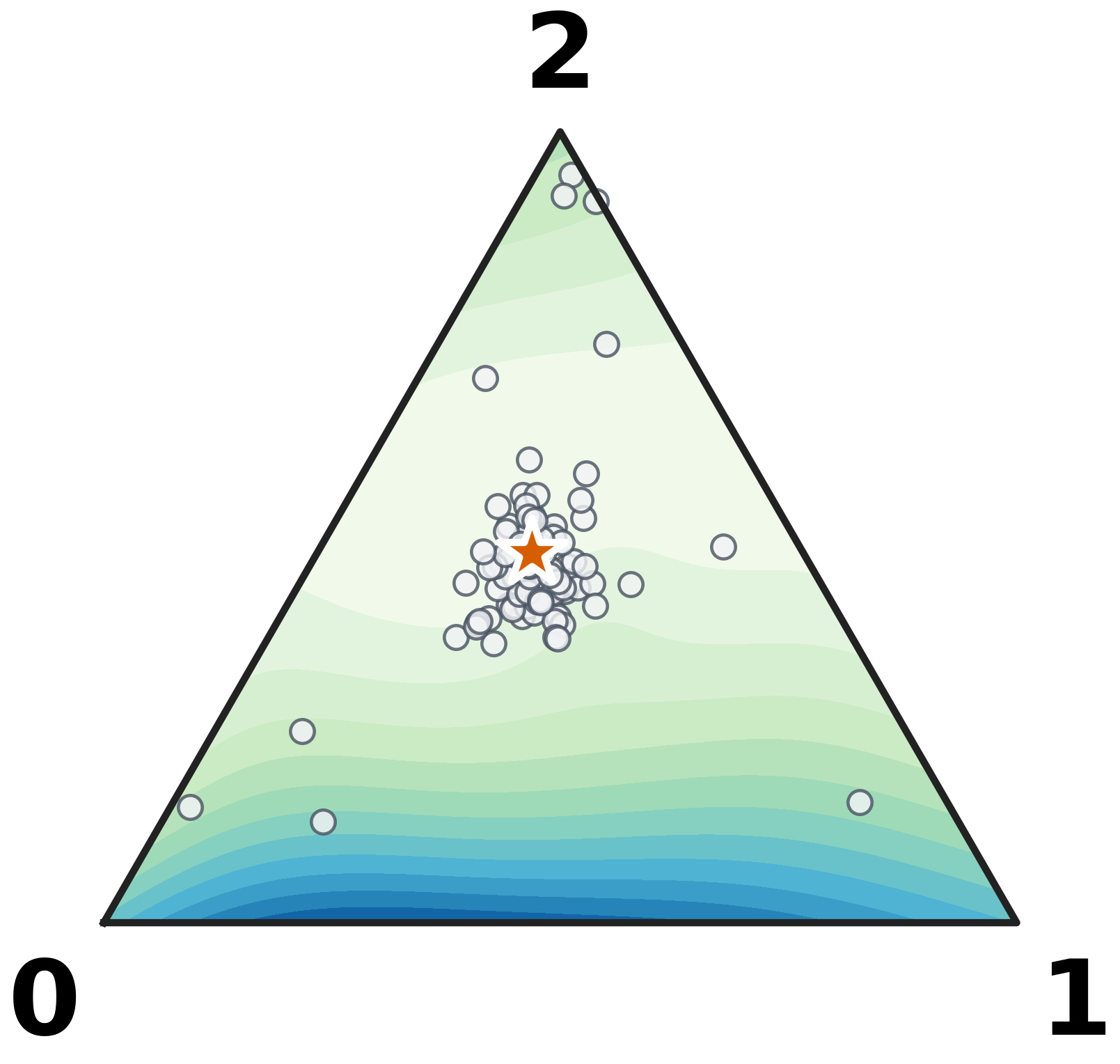}
        \caption{Hard}
        \label{fig:exp1-surface-hard}
    \end{subfigure}
    \hfill
    \begin{subfigure}[t]{0.235\linewidth}
        \centering
        \includegraphics[width=\linewidth]
        {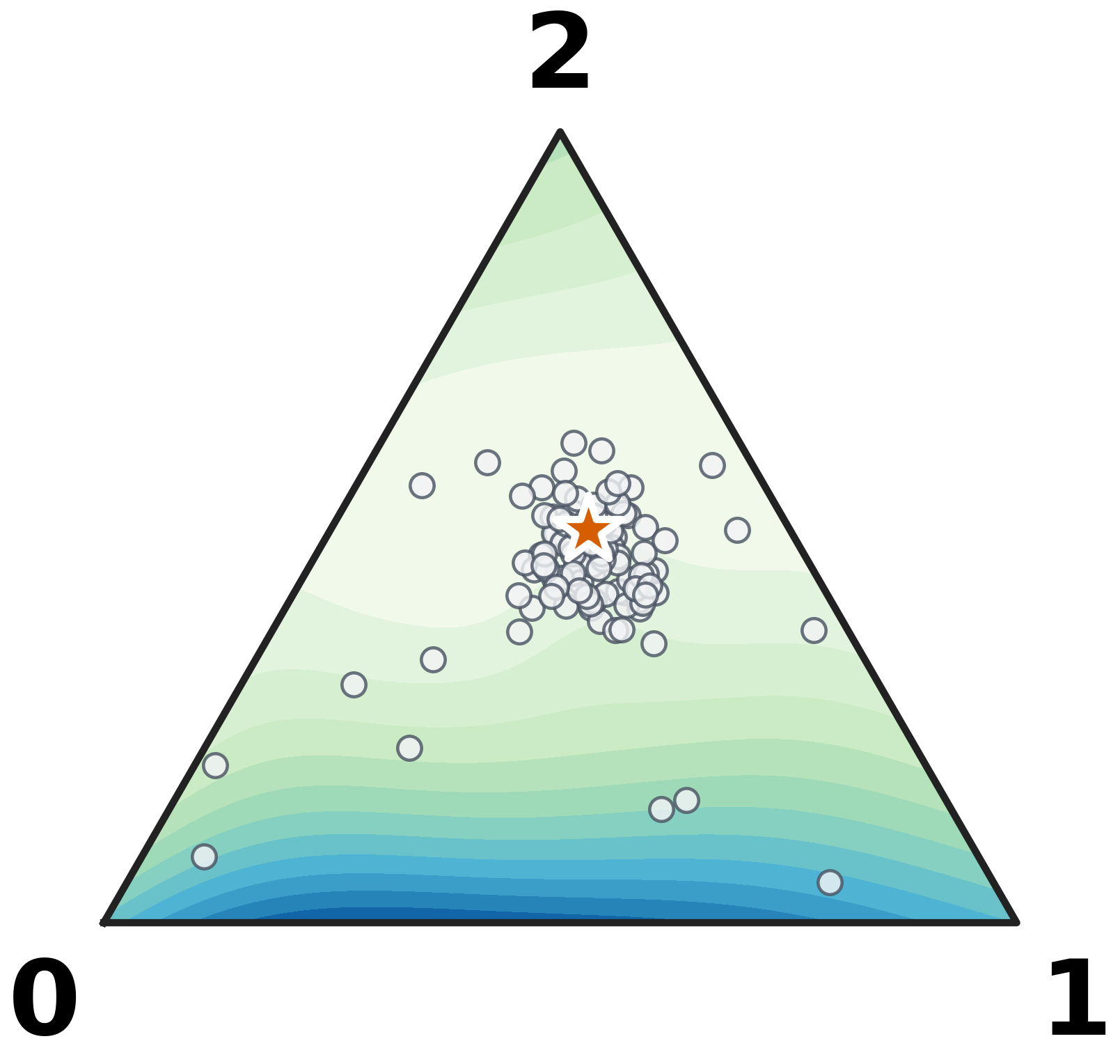}
        \caption{Extreme}
        \label{fig:exp1-surface-extreme}
    \end{subfigure}

    \begin{subfigure}[t]{0.485\linewidth}
        \centering
        \includegraphics[width=\linewidth]
        {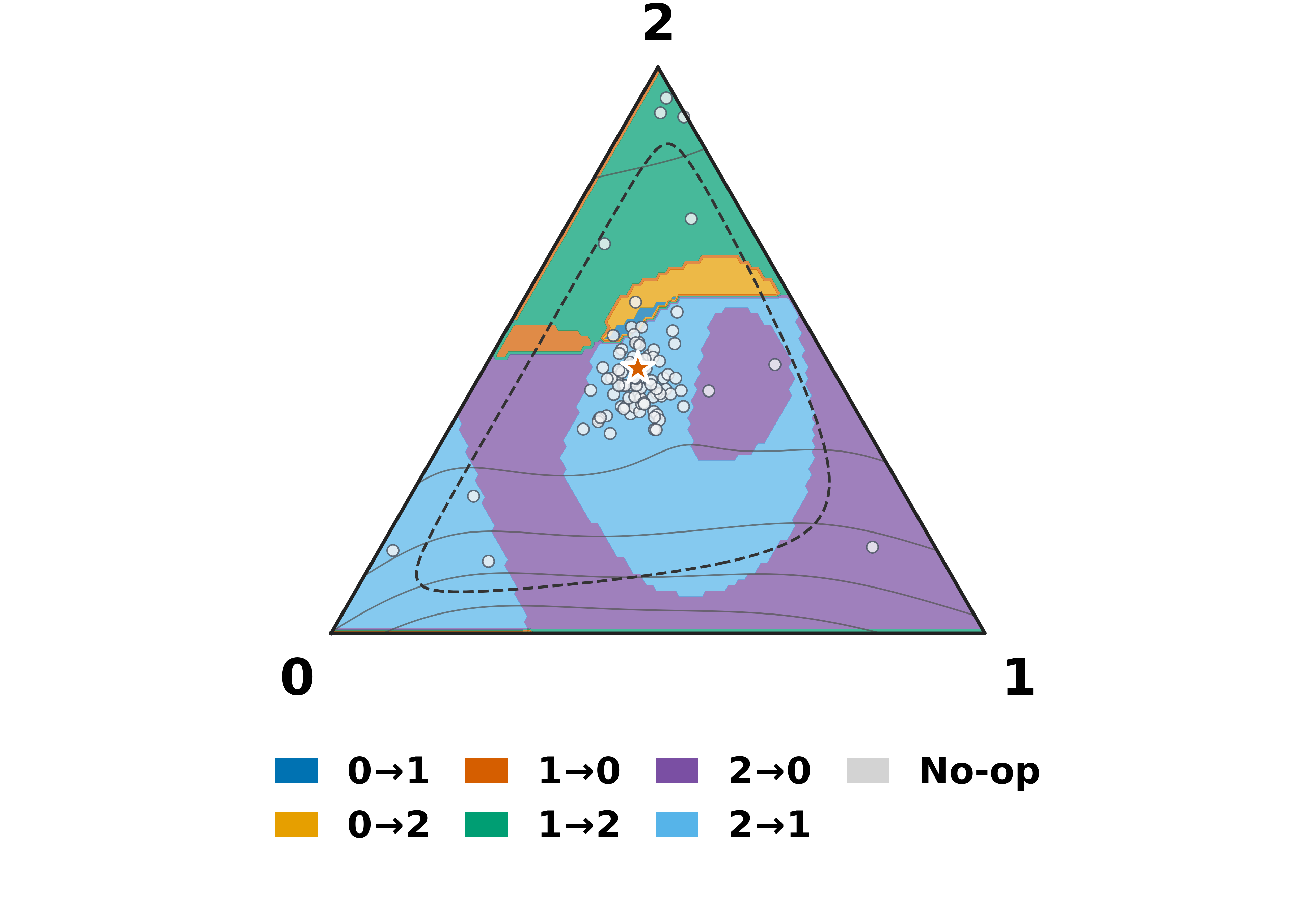}
        \caption{Oracle best local transfer}
        \label{fig:exp1-best-edge-hard}
    \end{subfigure}
    \hfill
    \begin{subfigure}[t]{0.485\linewidth}
        \centering
        \includegraphics[width=\linewidth]
        {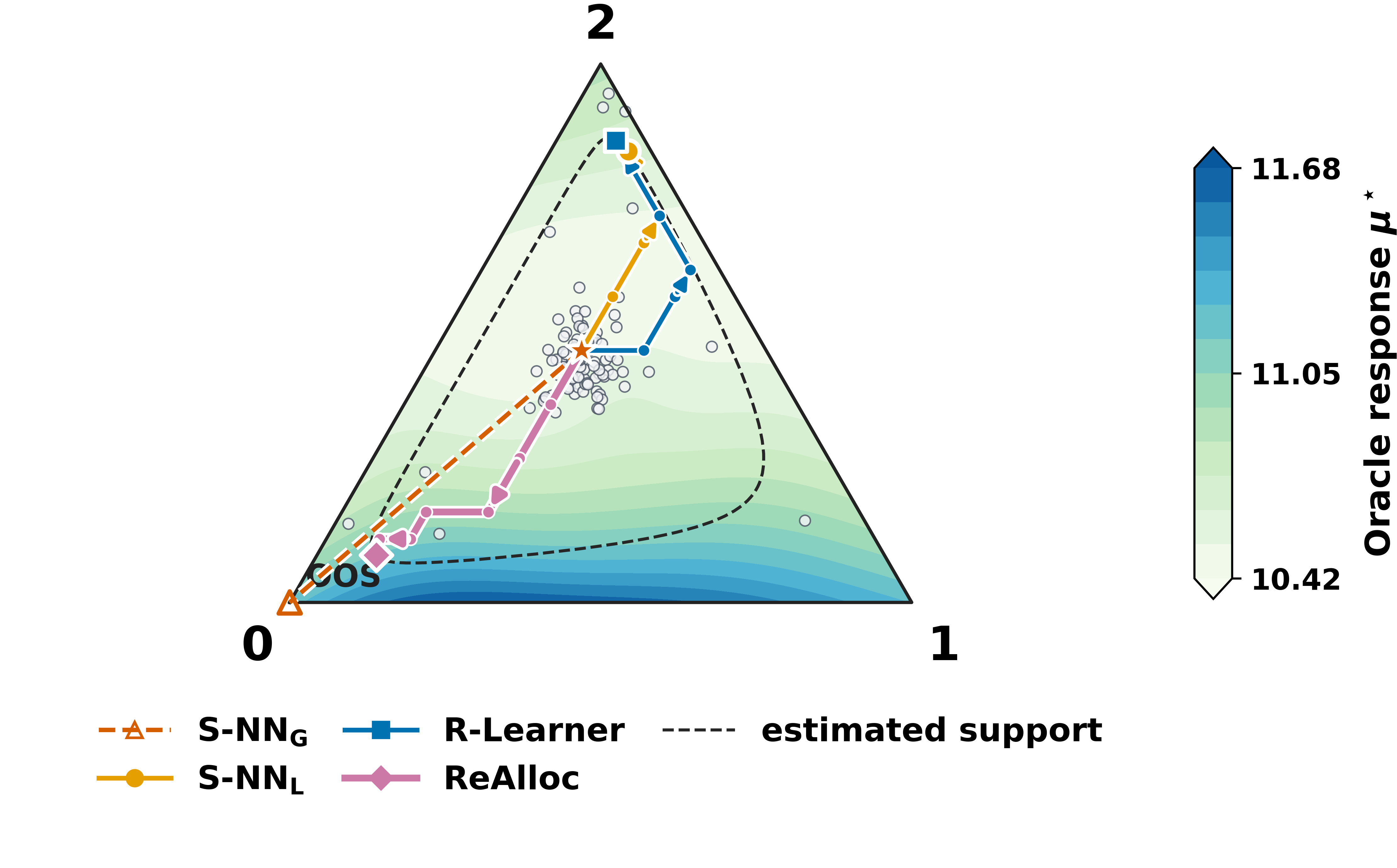}
        \caption{Representative policy trajectories}
        \label{fig:exp1-trajectory-hard}
    \end{subfigure}

    \caption{
        \textbf{Orcle response surface and policy behavior.}
    }
    \label{fig:exp1-static-stress}
\end{figure}

\paragraph{RQ2 result.}
Table~\ref{tab:sim-ablation} shows that ReAlloc requires both accurate local geometry and support-aware deployment.
The results separate geometry errors from safety errors.
Teacher-only local greedy remains support-safe, but it achieves weak edge correlation and recovers only a small fraction of the local oracle gain, showing that directly deploying noisy teacher targets is insufficient for stable policy improvement.
The most severe safety failure occurs when the support check is removed: despite using a learned local field, the policy frequently leaves the empirical support, leading to high OOS and a non-negligible raw-safe gap.
Overall, ReAlloc achieves the best combination of edge-ranking accuracy, deployable uplift, and support safety among learned variants, recovering about 60\% of the support-aware local oracle gain without accessing oracle counterfactuals.

\begin{table}[t]
\centering
\scriptsize
\setlength{\tabcolsep}{2.0pt}
\caption{
\textbf{Mechanism ablation under hard interaction stress.}
}
\label{tab:sim-ablation}
\resizebox{\columnwidth}{!}{%
\begin{tabular}{lcccccccc}
\toprule
\textbf{Variant}
& \multicolumn{4}{c}{\textbf{Local geometry}}
& \multicolumn{4}{c}{\textbf{Decision / safety}} \\
\cmidrule(lr){2-5}\cmidrule(lr){6-9}
& \textbf{Edge} $\uparrow$
& \textbf{Top} $\uparrow$
& \textbf{Corr} $\uparrow$
& \textbf{Err} $\downarrow$
& \textbf{Dep.} $\uparrow$
& \textbf{OOS} $\downarrow$
& \textbf{Gap} $\downarrow$
& \textbf{Rec.} $\uparrow$ \\
\midrule
Teacher-only
& .816 & .343 & .429 & 1.173
& .144 & .000 & .000 & .129 \\

w/o Orthogonalization
& .790 & .294 & .407 & .977
& .412 & .000 & .000 & .357 \\

w/o Support
& .784 & .291 & .407 & .922
& .001 & .984 & .589 & .001 \\

\textbf{ReAlloc}
& \textbf{.828} & \textbf{.370} & \textbf{.492} & 1.011
& \textbf{.522} & \textbf{.000} & \textbf{.000} & \textbf{.455} \\
\bottomrule
\end{tabular}%
}
\end{table}

\paragraph{RQ3 result.}
We finally evaluate periodic support rotation under a fixed response surface. With $N$ windows and $n$ rows per window, a \textit{Pooled Teacher} stores $\Theta(Nn)$ rows and incurs $\Theta(N^2n)$ cumulative fitting work by repeatedly retraining on the growing history. In contrast, \textsc{Fast-Slow ReAlloc} updates a current-window teacher alongside a fixed-memory student, requiring only $\Theta(n+B)$ storage and $\Theta(N(n+B))$ cumulative work, where $B$ is the student's buffer size. In our configuration ($B \approx n$, $N=16$), \textsc{Fast-Slow} uses only $12.5\%$ of the pooled storage and $36.9\%$ of its cumulative fitting time, while retaining $94.9\%$ of its final union uplift ($0.080$ vs.\ $0.084$). This bounded update cost becomes increasingly important when full-history retraining grows progressively more expensive.

\begin{table}[t]
\centering
\scriptsize
\setlength{\tabcolsep}{3.0pt}
\caption{
\textbf{Final rolling-window summary.}
}
\label{tab:sim-temporal}
\resizebox{\columnwidth}{!}{%
\begin{tabular}{lccccc}
\toprule
\textbf{Method}
& \textbf{Curr.} $\uparrow$
& \textbf{Past} $\uparrow$
& \textbf{Union} $\uparrow$
& \textbf{Memory} $\downarrow$
& \textbf{Time} $\downarrow$ \\
\midrule
Fresh Teacher
& .144 & .045 & .051
& $0.50\times$
& $0.34\times$ \\

Reset Student
& .152 & .047 & .053
& $0.50\times$
& $0.43\times$ \\

Warm-start Teacher
& .146 & .048 & .054
& $0.50\times$
& $0.32\times$ \\

Equal-memory Raw Replay
& .208 & .068 & .077
& $1.00\times$
& $0.95\times$ \\

\textbf{Fast-Slow ReAlloc}
& .216 & .071 & .080
& $1.00\times$
& $1.00\times$ \\
\midrule

Pooled Teacher
& \textbf{.224} & \textbf{.075} & \textbf{.084}
& $8.00\times$
& $2.71\times$ \\
\bottomrule
\end{tabular}%
}
\end{table}
\section{Real-World Evaluation on Taobao}
\label{sec:real_world}

We evaluate \textsc{ReAlloc} on a 60-day Taobao production dataset comprising 500K items, each allocating a fixed budget across paid advertising and promotional benefits. Evaluations are conducted via matched prospective replay and online A/B testing.

\subsection{Offline Evaluation}
\label{sec:real_offline}

\begin{table*}[t]
\centering
\caption{Offline evaluation on routine production traffic and randomized exploration traffic.}
\label{tab:real_offline}
\footnotesize
\setlength{\tabcolsep}{3.5pt}
\renewcommand{\arraystretch}{1.15}

\begin{tabular}{l cccc cccc}
\toprule
& \multicolumn{4}{c}{\textbf{A. Routine traffic}} & \multicolumn{4}{c}{\textbf{B. Randomized exploration traffic}} \\
\cmidrule(lr){2-5} \cmidrule(lr){6-9}
Method 
& $\rho(G,R){\uparrow}$ 
& Within-item conc.${\uparrow}$ 
& A--R gap${\uparrow}$ 
& Supp.\ viol.\,@\,upd.${\downarrow}$
& DR lift${\uparrow}$ 
& Action rate 
& Lift/act.${\uparrow}$ 
& Agreement lift${\uparrow}$ \\
\midrule
\textsc{ReAlloc} 
& $.028_{[.021,.036]}$ & $.512$ & $.023$ & $\mathbf{.000}@.31$ 
& $\mathbf{.025}_{[.012,.043]}$ & $.33$ & $\mathbf{.069}$ & $.098_{[.067,.153]}$ \\
\textsc{ReAlloc} w/o Supp. 
& $.028_{[.021,.036]}$ & $.512$ & $.023$ & $.678@.32$ 
& $-.021_{[-.028,-.012]}$ & $.96$ & $-.021$ & $-.040_{[-.061,-.022]}$ \\
\midrule
PTO-Local 
& $.010_{[.000,.019]}$ & $.507$ & $.012$ & $.675@.31$ 
& $-.019_{[-.026,-.011]}$ & $.97$ & $-.020$ & $-.025_{[-.044,-.004]}$ \\
Teacher-Only 
& $.017_{[.009,.024]}$ & $.509$ & $.011$ & $.663@.27$ 
& $-.019_{[-.028,-.011]}$ & $.92$ & $-.021$ & $-.024_{[-.045,-.002]}$ \\
Context-Only 
& $.019_{[.007,.025]}$ & $.506$ & $.007$ & $.696@.33$ 
& $-.015_{[-.023,-.006]}$ & $.97$ & $-.015$ & $-.022_{[-.043,-.001]}$ \\
\midrule
VCNet 
& $.029_{[.021,.039]}$ & $.514$ & $.026$ & $.660@.34$ 
& $.022_{[.015,.032]}$ & $.98$ & $.023$ & $.073_{[.055,.093]}$ \\
GIKS 
& $.026_{[.018,.035]}$ & $.513$ & $.023$ & $.657@.33$ 
& $.022_{[.014,.031]}$ & $.98$ & $.022$ & $.070_{[.051,.092]}$ \\
AdditiveROI 
& $\mathbf{.031}_{[.022,.040]}$ & $\mathbf{.516}$ & $\mathbf{.029}$ & $.575@.34$ 
& $.023_{[.015,.032]}$ & $.97$ & $.024$ & ${.078}_{[.059,.098]}$ \\
\bottomrule
\end{tabular}
\end{table*}

Offline evaluation combines two complementary data because routine production logs suffer from low overlap and unstable IPS estimates, as risk controls concentrate reallocations near incumbent allocations with small incremental effects. Thus, we use routine traffic solely for an out-of-time matched replay to evaluate directional ranking. 
For each moved event $e=(i,t)$, we identify no-move controls ($\Delta\mathbf{p}=0$) from the same date and first-level category using caliper-constrained KNN matching on pre-decision covariates. An evaluation model $m_{\mathrm{eval}}$, trained strictly on dates preceding the replay window, first removes predictable demand variation: \(
    u_{it}
    =
    y_{it}
    -m_{\mathrm{eval}}\!\left(\mathbf{x}_{it}\right)\).
The matched residual response is then defined as
\(
    R_e
    =
    u_e
    -
    \frac{1}{|\mathcal{C}(e)|}
    \sum_{j\in\mathcal{C}(e)} u_j,
\)
where $\mathcal{C}(e)$ denotes the matched no-move controls. For method $m$, let $G_m(e)$ denote its predicted gain for the realized reallocation and let
\(
    S_m(e)
    =
    \frac{G_m(e)}
    {\|\Delta\mathbf{p}_e\|_1+\epsilon}
\)
be the corresponding score. Matching removes predictable demand fluctuations and imbalance in observed covariates, but cannot eliminate unobserved confounding. We therefore use $R_e$ only for evaluating directional ranking.

Causal policy value is evaluated on an exploration set containing 10\% of the items. In this set, reallocation directions and magnitudes are randomized with controlled, estimable behavior propensities, which permits DR OPE. We additionally evaluate deployment support using historical reallocations. These evaluations provide three complementary views of a method: score validity, empirical action support, and policy value.
We report $\rho$ (Spearman correlation between $G_m$ and $R_e$), within-item concordance (pairwise $S_m$--$R_e$ agreement within items), aligned--reverse gap (response separation between top and bottom score quantiles), support violation at update rate (fraction of executed recommendations outside support, reported with the update rate), DR lift (DR improvement over the logging policy), per-acted lift (DR lift per non-noop action), and agreement lift (response difference when logged actions agree with versus oppose the recommendation). Full definitions and inference details are in Appendix~\ref{app:rw_metric}.

Table~\ref{tab:real_offline} presents the results. On routine traffic, the matched residual \(R=\tau+\epsilon\) exhibits a noise floor: the residual standard deviation \(sd(R) \approx 0.34\) vastly overshadows the true per-move causal effect \(sd(\tau)\approx0.018\), yielding a SNR of merely $\approx 0.05$.  Even an oracle ranker with perfect knowledge of \(\tau\) attains a maximum Spearman \(\rho^\star \approx sd(\tau) / sd(R) \approx 0.05\) (verified via Monte-Carlo: \(0.050\pm0.004\)). Within this highly noisy regime, \textsc{ReAlloc} achieves a competitive $\rho$ of $0.028$ (recovering over 50\% of the oracle bound), performing on par with strong baselines like AdditiveROI. More importantly, \textsc{ReAlloc}'s explicit support layer completely eliminates historical support violations ($0.000$ vs. $0.678$ for the ablated version, and $57.5\%$--$69.6\%$ for baselines) without sacrificing the update rate. This highlights a critical flaw in existing methods: they frequently recommend actions in regions lacking empirical evidence. The catastrophic failure of \textsc{ReAlloc} w/o Supp.\ (negative DR lift of $-0.021$) empirically proves that constraining decisions within the data support is indispensable for preventing disastrous extrapolation errors.
On the randomized exploration set, \textsc{ReAlloc} achieves the highest DR lift ($0.025$) while intervening on only $33\%$ of the cases, contrasting sharply with the near-ubiquitous interventions ($97\%$) triggered by AdditiveROI. This translates to a significantly higher per-action lift ($0.069$ vs. $0.024$). In real-world e-commerce operations, frequent budget or price adjustments incur implicit friction costs, destabilize item pricing, and can degrade user trust. Therefore, a ``less but more accurate'' intervention strategy is highly preferred. By delivering superior aggregate causal lift through fewer, highly targeted, and empirically supported interventions, \textsc{ReAlloc} demonstrates exceptional operational efficiency and deployment readiness.

\subsection{Randomized Online A/B Test}
\label{sec:online_ab}

We conducted a 14-day online A/B test on 300K eligible items, assigning 10\% to the treatment arm (\textsc{ReAlloc}) and the remainder to the control (AdditiveROI). Strict item-level randomization prevents budget interference, with both arms sharing identical eligibility and risk constraints.
As shown in Table~\ref{tab}, \textsc{ReAlloc} increases pay orders by $3.53\%$. This improvement is achieved without additional marketing expenditure: total realized spend decreases by $2.47\%$, while overall marketing ROI remains statistically unchanged. At the same time, profit margin improves by $3.26$ percentage points, and the corresponding platform income improves by $3.26$ percentage points. The higher transaction volume does not translate into higher aggregate GMV during the current observation, GMV decreases by $2.64\%$. Thus, \textsc{ReAlloc} generates more transactions with lower total expenditure and improved profitability, but exhibits a trade-off against average order value and GMV.

\begin{table}[t]
\centering
\caption{Online A/B-test results.}
\label{tab}
\small
\setlength{\tabcolsep}{5pt}
\begin{tabular}{lccc}
\toprule
\textbf{Metric} & \textbf{Effect} & \textbf{95\% CI} & \textbf{(p)-value} \\
\midrule
Pay orders & $\mathbf{+3.5\%}$ & $\mathbf{[+2.8\%,+5.1\%]}$ & $.001$ \\
GMV & $-2.6\%$ & $[-3.1\%,-1.1\%]$ & $.003$ \\
Total cost & $-2.5\%$ & $[-2.0\%,-0.8\%]$ & $.006$ \\
Marketing ROI & $-0.1\%$ & $[-1.8\%,+1.5\%]$ & $.820$ \\
Profit margin & $\mathbf{+1.42\text{pt}}$ & $\mathbf{[+0.8\text{pt},+2.0\text{pt}]}$ & $.001$ \\
Platform income & +3.2\text{pt} & $\mathbf{[+2.1\text{pt}, +4.4 \text{pt}]}$ & .001 \\
\bottomrule
\end{tabular}
\end{table}
\section{Conclusion}
We formulate fixed-budget multi-channel uplift as policy learning on the simplex and identify supported local causal reallocation as the relevant decision primitive. \textsc{ReAlloc} combines orthogonal local-effect estimation, marginal-field distillation, and support-aware updates. Synthetic and offline studies show improved deployable policy value under confounding and limited overlap. An A/B test on TaoBao increases pay orders and profitability while revealing a GMV trade-off.

\bibliographystyle{ACM-Reference-Format}
\bibliography{sample-base}

\appendix

\newpage
\appendix

\section{Formal Assumptions and Proofs}
\label{app:theory_proofs}
\label{app:assumptions}

\subsection{Notation and Assumptions}
Let $d:=K-1$, $\bar p:=K^{-1}\mathbf 1$, and choose
$Q\in\mathbb R^{K\times d}$ such that
\begin{equation}
    Q^\top Q=I_d,
    \qquad
    QQ^\top=\Pi_{\mathcal T},
    \qquad
    Q^\top\mathbf 1=0.
    \label{eq:tangent_basis_appendix}
\end{equation}
Write
\begin{align}
    p&=\bar p+Qz,
    &
    Z&:=Q^\top P,
    &
    \widetilde Z&:=Z-z,
    \nonumber\\
    \nu(X,z)&:=\mu(X,\bar p+Qz),
    &
    \beta^\star(X,z)&:=\nabla_z\nu(X,z).
    \label{eq:tangent_coordinates}
\end{align}
Then
\begin{equation}
    g^\star(X,p)=Q\beta^\star(X,Q^\top p).
    \label{eq:field_coordinate_equivalence}
\end{equation}
For a bounded, compactly supported kernel $\mathsf K$ and bandwidth $h>0$,
define
\begin{align}
    \mathbb E_{h,z}[A\mid X]
    &:=\frac{
      \mathbb E[\mathsf K((Z-z)/h)A\mid X]
    }{
      \mathbb E[\mathsf K((Z-z)/h)\mid X]
    },
    \label{eq:localized_expectation}\\
    e_{h,z}(X)
    &:=\mathbb E_{h,z}[\widetilde Z\mid X],
    &
    m_{h,z}(X)
    &:=\mathbb E_{h,z}[Y\mid X].
    \label{eq:local_nuisances_appendix}
\end{align}
Let $\Psi_{h,z}$ be the score in~\eqref{eq:orthogonal_moment} and
$\|f\|_{\infty,\mathcal R}:=
\sup_{(X,p)\in\mathcal R}\|f(X,p)\|_2$.

\begin{assumption}[Identification]
\label{ass:identification}
For every $p\in\mathcal C(X)$,
\begin{equation}
    Y=Y(P),
    \qquad
    \{Y(p):p\in\mathcal C(X)\}\perp P\mid X,
    \label{eq:identification_assumption}
\end{equation}
and all conditional moments below exist.
\end{assumption}

\begin{assumption}[Supported paths and local overlap]
\label{ass:support}
For almost every $X$, $\mathcal C(X)\subseteq\Delta^{K-1}$ is compact,
contains $p_0(X)$, and contains every path invoked below. Uniformly over
$(X,p)\in\mathcal R$, with $z=Q^\top p$,
\begin{equation}
    c_0h^d
    \le
    \mathbb E[\mathsf K((Z-z)/h)\mid X]
    \le
    C_0h^d
    \label{eq:local_mass}
\end{equation}
for all sufficiently small $h$ and constants $0<c_0\le C_0<\infty$.
\end{assumption}

\begin{assumption}[Local smoothness and curvature]
\label{ass:smoothness}
The response is continuously differentiable on $\mathcal C(X)$. Define
\begin{align}
    R_{X,z}(u)
    &:=\nu(X,z+u)-\nu(X,z)-\beta^\star(X,z)^\top u,
    \label{eq:taylor_remainder}\\
    a_{h,z}
    &:=\widetilde Z-e_{h,z}(X),
    \nonumber\\
    \Sigma_{h,z}(X)
    &:=\mathbb E_{h,z}[a_{h,z}a_{h,z}^\top\mid X],
    \label{eq:local_covariance}\\
    C_{h,z}(X)
    &:=\mathbb E_{h,z}\!\left[
      a_{h,z}
      \{R_{X,z}(\widetilde Z)
        -\mathbb E_{h,z}[R_{X,z}(\widetilde Z)\mid X]\}
      \mid X
    \right].
    \label{eq:local_bias_covariance}
\end{align}
Uniformly on $\mathcal R$,
\begin{equation}
    \kappa h^2I_d
    \preceq\Sigma_{h,z}(X)
    \preceq\bar\kappa h^2I_d,
    \qquad
    \|\Sigma_{h,z}(X)^{-1}C_{h,z}(X)\|_2
    \le\epsilon_{\mathrm{loc}}(h),
    \label{eq:local_bias_bound}
\end{equation}
where $\epsilon_{\mathrm{loc}}(h)\to0$.
\end{assumption}

\begin{assumption}[Cross-fitting and empirical complexity]
\label{ass:estimation}
The nuisance estimators are cross-fitted. For the population root
$\beta_h^\dagger$ of
$\Psi_{h,z}(\beta,m_{h,z},e_{h,z};X)=0$, let
\begin{align}
    \delta_m(X,z)
    &:=\widehat m_{h,z}(X)-m_{h,z}(X),
    &
    \delta_e(X,z)
    &:=\widehat e_{h,z}(X)-e_{h,z}(X).
    \label{eq:nuisance_errors}
\end{align}
Suppressing $(X,z)$, set
\begin{align}
    u_{h,z}&:=\Sigma_{h,z}^{-1}\delta_e,
    &
    q_{h,z}&:=\delta_m-\beta_h^{\dagger\top}\delta_e,
    \nonumber\\
    r_{\mathrm{orth}}
    &:=\sup_{(X,p)\in\mathcal R}\|u_{h,z}q_{h,z}\|_2.
    \label{eq:orthogonal_remainder}
\end{align}
With probability at least $1-\delta$,
\begin{align}
    &\sup_{(X,p)\in\mathcal R}
    \left\|
      \widehat\Psi_{h,z}(\beta_h^\dagger,\widehat m_{h,z},\widehat e_{h,z};X)
      -\Psi_{h,z}(\beta_h^\dagger,\widehat m_{h,z},\widehat e_{h,z};X)
    \right\|_2
    \nonumber\\
    &\hspace{18mm}\le
    Ch\sqrt{
      \frac{\mathfrak C_{\mathcal G}+\log(1/\delta)}{nh^d}
    }.
    \label{eq:score_concentration}
\end{align}
The empirical local curvature is at least $\kappa h^2/2$, and root-solving
error is of smaller order than the right-hand side of
\eqref{eq:score_concentration}.
\end{assumption}

\begin{assumption}[Composite teacher and student approximation]
\label{ass:approximation}
The implemented teacher and deployed student satisfy
\begin{equation}
    \|\widehat g_T-\widehat g_T^{\mathrm{orth}}\|_{\infty,\mathcal R}
    \le\epsilon_{\mathrm{comp}},
    \qquad
    \|\widehat g-\widehat g_T\|_{\infty,\mathcal R}
    \le\epsilon_S.
    \label{eq:approximation_errors}
\end{equation}
\end{assumption}

\subsection{Proof of Theorem~\ref{thm:simplex_integral}}
\label{app:proof_simplex_integral}

\begin{proof}
Since $\mathbf 1^\top\gamma(t)=1$,
$\dot\gamma(t)\in\mathcal T$ almost everywhere. Hence
\begin{align}
    \Delta(X,p)
    &=\int_0^1
      \nabla_p\mu(X,\gamma(t))^\top\dot\gamma(t)\,\mathrm dt
      \nonumber\\
    &=\int_0^1
      \{\Pi_{\mathcal T}\nabla_p\mu(X,\gamma(t))\}^\top
      \dot\gamma(t)\,\mathrm dt
      \nonumber\\
    &=\int_0^1
      g^\star(X,\gamma(t))^\top\dot\gamma(t)\,\mathrm dt.
\end{align}
\end{proof}

\subsection{Proof of Proposition~\ref{prop:pto_failure}}
\label{app:proof_pto_failure}

\begin{proof}
Identify the two-channel simplex with $[0,1]$. Let $P=0$ almost surely,
$\mu(p)=-cp$, and $\widehat\mu(p)=Mp$, where $c,M>0$. Then
\begin{align}
    \mathbb E_{\mathrm{log}}
    [\{\widehat\mu(P)-\mu(P)\}^2]
    &=0,
    &
    p^\star&=0,
    \nonumber\\
    \widehat p_{\mathrm{PTO}}&=1,
    &
    \mu(p^\star)-\mu(\widehat p_{\mathrm{PTO}})&=c.
    \label{eq:pto_regret_counterexample}
\end{align}
For $\mu\equiv0$ and
$\widehat\mu_n(p)=n^{-1/2}\sin(np)$,
\begin{equation}
    \|\widehat\mu_n-\mu\|_{L_2([0,1])}^2\le n^{-1}\to0,
    \qquad
    \|\widehat\mu_n'-\mu'\|_\infty=\sqrt n\to\infty.
\end{equation}
\end{proof}

\subsection{Shortcut-Gradient Decomposition}
\label{app:proof_shortcut}

Assume
\begin{equation}
    Y=\phi(X,P,U)+\varepsilon,
    \qquad
    \mathbb E[\varepsilon\mid X,P,U]=0,
    \label{eq:structural_outcome}
\end{equation}
and let $\pi_b(p\mid X,U)$ be the logging density. With
$F_p:=F(U\mid X,P=p)$ and
$s_b(U):=\nabla_p\log\pi_b(p\mid X,U)$, Bayes' rule yields
\begin{equation}
    \nabla_p\log f(U\mid X,P=p)
    =s_b(U)-\mathbb E_{F_p}[s_b(U)].
    \label{eq:posterior_score}
\end{equation}
Under dominated differentiation,
\begin{align}
    \nabla_p m(X,p)
    &=\mathbb E_{F_p}[\nabla_p\phi(X,p,U)]
      +\operatorname{Cov}_{F_p}(\phi(X,p,U),s_b(U)),
    \label{eq:observational_gradient_exact}\\
    \nabla_p\mu(X,p)
    &=\mathbb E_{F(U\mid X)}[\nabla_p\phi(X,p,U)].
    \label{eq:causal_gradient_exact}
\end{align}
Therefore
\begin{align}
    b_{\mathrm{sc}}(X,p)
    &=\Pi_{\mathcal T}\!\int
      \nabla_p\phi(X,p,u)
      \{\mathrm dF_p(u)-\mathrm dF(u\mid X)\}
      \nonumber\\
    &\quad+
      \Pi_{\mathcal T}\operatorname{Cov}_{F_p}
      \!\left(\phi(X,p,U),s_b(U)\right).
    \label{eq:shortcut_decomposition_exact}
\end{align}
Assumption~\ref{ass:identification} implies $m(X,p)=\mu(X,p)$ on
$\mathcal C(X)$; orthogonality alone does not.

\subsection{Proof of Theorem~\ref{thm:teacher_identification}}
\label{app:proof_teacher_identification}

\begin{proof}
Fix $(X,z)$ and abbreviate
$e=e_{h,z}(X)$, $m=m_{h,z}(X)$,
$a=\widetilde Z-e$, $R=R_{X,z}(\widetilde Z)$, and
$\bar R=\mathbb E_{h,z}[R\mid X]$. Under
Assumption~\ref{ass:identification},
\begin{equation}
    Y-m
    =\beta^\star(X,z)^\top a+(R-\bar R)+\varepsilon,
    \qquad
    \mathbb E_{h,z}[\varepsilon\mid X,Z]=0.
    \label{eq:local_centering}
\end{equation}
Substitution into~\eqref{eq:orthogonal_moment} gives
\begin{align}
    \Psi_{h,z}(\beta,m,e;X)
    &=\Sigma_{h,z}(X)
      \{\beta^\star(X,z)-\beta(X,z)\}
      +C_{h,z}(X),
    \label{eq:population_moment_expansion}\\
    \beta_h^\dagger(X,z)-\beta^\star(X,z)
    &=\Sigma_{h,z}(X)^{-1}C_{h,z}(X).
    \label{eq:population_root_bias}
\end{align}
Assumption~\ref{ass:smoothness} and $\|Qv\|_2=\|v\|_2$ imply
\eqref{eq:population_teacher_bias}.

Let
$r:=Y-m-\beta_h^{\dagger\top}(\widetilde Z-e)$. Since
$\mathbb E_{h,z}[\widetilde Z-e\mid X]
=\mathbb E_{h,z}[r\mid X]=0$, for arbitrary directions
$\eta_m,\eta_e$,
\begin{align}
    D_m\Psi_{h,z}[\eta_m]
    &=-\eta_m\mathbb E_{h,z}[\widetilde Z-e\mid X]=0,
    \label{eq:orthogonality_m}\\
    D_e\Psi_{h,z}[\eta_e]
    &=-\eta_e\mathbb E_{h,z}[r\mid X]
      +\mathbb E_{h,z}[\widetilde Z-e\mid X]
       \beta_h^{\dagger\top}\eta_e
      =0.
    \label{eq:orthogonality_e}
\end{align}
Direct expansion yields
\begin{equation}
    \Psi_{h,z}(\beta_h^\dagger,\widehat m_{h,z},\widehat e_{h,z};X)
    -\Psi_{h,z}(\beta_h^\dagger,m_{h,z},e_{h,z};X)
    =\delta_e
      \{\delta_m-\beta_h^{\dagger\top}\delta_e\}.
    \label{eq:score_second_order}
\end{equation}
After multiplication by $\Sigma_{h,z}^{-1}$, its norm is bounded by
$r_{\mathrm{orth}}$. In raw nuisance norms, if
$|\delta_m|\le\bar r_m$,
$\|\delta_e\|_2\le\bar r_e$, and
$\|\beta_h^\dagger\|_2\le M_\beta$, then
\begin{equation}
    r_{\mathrm{orth}}
    \le
    \frac{\bar r_e(\bar r_m+M_\beta\bar r_e)}{\kappa h^2}.
    \label{eq:raw_nuisance_bound}
\end{equation}
Assumption~\ref{ass:estimation} and empirical curvature give
\begin{equation}
    \|\widehat\beta-\beta_h^\dagger\|_{\infty,\mathcal R}
    \le
    C\sqrt{
      \frac{\mathfrak C_{\mathcal G}+\log(1/\delta)}{nh^{d+2}}
    }
    +Cr_{\mathrm{orth}}.
    \label{eq:empirical_teacher_rate}
\end{equation}
Combining~\eqref{eq:population_teacher_bias} and
\eqref{eq:empirical_teacher_rate}, using $d+2=K+1$, yields
\begin{equation}
    \|\widehat g_T^{\mathrm{orth}}-g^\star\|_{\infty,\mathcal R}
    \le\epsilon_T(n,h,\delta).
    \label{eq:orthogonal_teacher_rate}
\end{equation}
Finally,~\eqref{eq:composite_bridge} and the triangle inequality imply
\eqref{eq:teacher_error}.
\end{proof}

\subsection{Proof of Theorem~\ref{thm:field_regret}}
\label{app:proof_field_regret}

\begin{proof}
For any $\gamma\in\mathfrak P_X(p)$,
$\dot\gamma(t)\in\mathcal T$ almost everywhere; hence
\begin{align}
    \widehat\Delta_\theta(X,p)
    &=s_\theta(X,p)-s_\theta(X,p_0(X))
      \nonumber\\
    &=\int_0^1
      \nabla_p s_\theta(X,\gamma(t))^\top\dot\gamma(t)\,\mathrm dt
      \nonumber\\
    &=\int_0^1
      \widehat g(X,\gamma(t))^\top\dot\gamma(t)\,\mathrm dt.
    \label{eq:potential_identity_proof}
\end{align}
Theorem~\ref{thm:simplex_integral} and Cauchy--Schwarz give
\begin{align}
    |\widehat\Delta_\theta(X,p)-\Delta(X,p)|
    &\le\int_0^1
      \|\widehat g(X,\gamma(t))-g^\star(X,\gamma(t))\|_2
      \|\dot\gamma(t)\|_2\,\mathrm dt
      \nonumber\\
    &\le L(\gamma)\epsilon_g,
    \label{eq:pathwise_error_proof}
\end{align}
which proves~\eqref{eq:uplift_error_bound}.

Let $\pi^\star=\pi_{L_{\max}}^\star$. Adding and subtracting estimated
uplift gives
\begin{align}
    \operatorname{Reg}_{L_{\max}}(\widehat\pi)
    &=\mathbb E_X\!\left[
      \Delta(X,\pi^\star(X))-\Delta(X,\widehat\pi(X))
    \right]
      \nonumber\\
    &\le
      2L_{\max}\epsilon_g
      +\mathbb E_X\!\left[
        \widehat\Delta_\theta(X,\pi^\star(X))
        -\widehat\Delta_\theta(X,\widehat\pi(X))
      \right]
      \nonumber\\
    &\le2L_{\max}\epsilon_g+\epsilon_{\mathrm{search}},
    \label{eq:field_regret_proof}
\end{align}
proving~\eqref{eq:field_regret_bound}.

Under Assumption~\ref{ass:approximation} and
Theorem~\ref{thm:teacher_identification},
\begin{equation}
    \|\widehat g-g^\star\|_{\infty,\mathcal R}
    \le
    \epsilon_S+\epsilon_{\mathrm{comp}}+\epsilon_T(n,h,\delta),
    \label{eq:teacher_student_triangle}
\end{equation}
which yields~\eqref{eq:teacher_student_regret}.
\end{proof}

\paragraph{Implemented greedy trace.}
Let $\gamma_{\mathrm{gr}}$ be the piecewise-linear interpolation of
\begin{equation*}
    p^{(0)}=p_0(X),\quad p^{(1)},\ldots,p^{(R)}.
\end{equation*}
If every accepted segment lies in $\mathcal C(X)$, then
\begin{align}
    L(\gamma_{\mathrm{gr}})
    &=\sum_{r=0}^{R-1}\|p^{(r+1)}-p^{(r)}\|_2,
    \label{eq:greedy_trace_length}\\
    \sum_{r=0}^{R-1}
    \{s_\theta(X,p^{(r+1)})-s_\theta(X,p^{(r)})\}
    &=s_\theta(X,p^{(R)})-s_\theta(X,p^{(0)}).
    \label{eq:greedy_telescoping}
\end{align}

\subsection{Conditional Outcome Improvement}
\label{app:proof_safe_improvement}

\begin{corollary}[Conditional outcome improvement]
\label{cor:conditional_improvement}
\label{thm:safe_improvement}
Suppose $U(X,p)$ satisfies, on an event $\mathcal E_U$,
\begin{equation}
    |\widehat\Delta_\theta(X,p)-\Delta(X,p)|
    \le U(X,p),
    \qquad
    \forall X,\ p\in\mathcal A_{L_{\max}}(X).
    \label{eq:uniform_radius_event}
\end{equation}
Define
\begin{equation}
    \pi_U(X)
    :=\begin{cases}
      \widehat\pi(X),
      &\widehat\Delta_\theta(X,\widehat\pi(X))
       -U(X,\widehat\pi(X))\ge0,\\
      p_0(X),&\text{otherwise}.
    \end{cases}
    \label{eq:conditional_fallback}
\end{equation}
Then $\Delta(X,\pi_U(X))\ge0$ for every $X$ on $\mathcal E_U$.
\end{corollary}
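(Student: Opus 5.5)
The argument is a pointwise case analysis on the two branches of the fallback rule~\eqref{eq:conditional_fallback}, carried out on the event $\mathcal E_U$. Fix $X$ and assume $\mathcal E_U$ holds. The first step is to note that $\Delta(X,p_0(X))=\mu(X,p_0(X))-\mu(X,p_0(X))=0$ by the definition~\eqref{eq:pointwise_uplift}. Hence on the fallback branch, $\pi_U(X)=p_0(X)$, the claim $\Delta(X,\pi_U(X))\ge0$ holds trivially.

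On the acting branch we have $\pi_U(X)=\widehat\pi(X)$ and $\widehat\Delta_\theta(X,\widehat\pi(X))\ge U(X,\widehat\pi(X))$. The standing assumption of Section~\ref{subsec:theory_regret} gives $\widehat\pi(X)\in\mathcal A_{L_{\max}}(X)$, so the uniform radius inequality~\eqref{eq:uniform_radius_event} applies at $p=\widehat\pi(X)$. Its one-sided consequence is
\[
  \Delta(X,\widehat\pi(X))\ \ge\ \widehat\Delta_\theta(X,\widehat\pi(X))-U(X,\widehat\pi(X)),
\]
and the right-hand side is nonnegative by the branch condition. Together the two branches give the claim for every $X$ on $\mathcal E_U$.

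There is no real technical obstacle; the only point needing care is membership. The bound~\eqref{eq:uniform_radius_event} is only asserted on $\mathcal A_{L_{\max}}(X)$, so I must use that the local search output lies in this reachable set. It is also worth remarking that $p_0(X)\in\mathcal A_{L_{\max}}(X)$ via the constant path of length zero, since $p_0(X)\in\mathcal C(X)$ by Assumption~\ref{ass:support}, though the fallback case does not actually need it.

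As a closing remark, I would point out a concrete admissible choice of radius: Theorem~\ref{thm:field_regret} supplies $U(X,p)=L(\gamma)\epsilon_g$ for any admissible path $\gamma$. With Theorem~\ref{thm:teacher_identification} and Corollary~\ref{cor:teacher_student_regret}, this holds with probability at least $1-\delta$, taking $\epsilon_g=\epsilon_T+\epsilon_{\mathrm{comp}}+\epsilon_S$. This instantiates $\mathcal E_U$ but is not needed for the logical argument.
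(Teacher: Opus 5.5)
Your proof is correct and matches the paper's argument. The fallback branch has zero uplift by definition, and on the acting branch the one-sided consequence of the radius bound gives $\Delta(X,\widehat\pi(X))\ge\widehat\Delta_\theta(X,\widehat\pi(X))-U(X,\widehat\pi(X))\ge0$; your explicit check that $\widehat\pi(X)\in\mathcal A_{L_{\max}}(X)$, which follows from the standing assumption on the local search, is a step the paper uses only implicitly.
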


\begin{proof}
The fallback case has zero uplift. Otherwise,
\begin{equation}
    \Delta(X,\pi_U(X))
    \ge
    \widehat\Delta_\theta(X,\widehat\pi(X))
    -U(X,\widehat\pi(X))
    \ge0.
\end{equation}
\end{proof}
Without separate calibration, ensemble or MC-dropout dispersion is not assumed
to satisfy~\eqref{eq:uniform_radius_event}.

\section{Additional Details of the Synthetic Experiments}
\label{app:dgp_details}

\subsection{State and Budget Generation}
\label{app:dgp_state}

For each item \(i\), we independently sample persistent state blocks:
\[
C_i^{\mathrm{item}}\sim\mathcal N(0,I_{d_C}),
\quad
E_i^{\mathrm{item}}\sim\mathcal N(0,I_{d_E}),
\quad
S_i^{\mathrm{item}}\sim\mathcal N(0,I_{d_S}).
\]
For each period \(t\), we sample lower-variance temporal shocks:
\[
C_t^{\mathrm{time}},E_t^{\mathrm{time}},S_t^{\mathrm{time}}
\sim
\mathcal N(0,\sigma_t^2I).
\]
The observed state at time $t$ is constructed as:
\[
C_{it}=C_i^{\mathrm{item}}+C_t^{\mathrm{time}},
\quad
E_{it}=E_i^{\mathrm{item}}+E_t^{\mathrm{time}},
\quad
S_{it}=S_i^{\mathrm{item}}+S_t^{\mathrm{time}},
\quad
H_{it}=[C_{it},E_{it},S_{it}].
\]
We set the dimensions \(d_C=d_E=d_S=6\) and the temporal variance \(\sigma_t=0.35\).
The total budget is generated via:
\begin{equation}
B_{it}
=
\operatorname{clip}
\left(
\exp\{a_B^\top C_{it}+\zeta_{it}\},
B_{\min},B_{\max}
\right),
\qquad
\zeta_{it}\sim\mathcal N(0,0.2^2),
\end{equation}
with boundary constraints \(B_{\min}=0.2\) and \(B_{\max}=5\).
The train, validation, and test splits consist of mutually disjoint items to prevent data leakage.

\subsection{Conditional Logging Policy}
\label{app:dgp_logging}

To handle compositional allocations on the simplex, we utilize the isometric log-ratio (ILR) transform. Let \(Q^\top Q=I_{K-1}\) and \(Q^\top\mathbf 1=0\). For an interior simplex point \(p\), we define \(u(p)=Q^\top\log p\).

The logging policy is governed by a state-dependent propensity mean, which was introduced conceptually in the main text and is defined mathematically here as:
\begin{equation}
q(H,B)
=
\operatorname{softmax}
\!\left(
(1+\alpha_{\mathrm{bd}})
\lambda_{\log}
\left[
W_S S+\alpha_{\mathrm{cf}}W_C C
+w_B\log(1+B)
\right]
\right).
\label{eq:dgp_logging_mean}
\end{equation}
The context-dependent covariance matrix is formulated as:
\begin{equation}
\Sigma_H
=
\sigma_{\mathrm{ov}}^2
\left\{1+0.2\tanh(E_1)\right\}^2
\operatorname{diag}(\nu_1,\ldots,\nu_{K-1}),
\label{eq:dgp_covariance}
\end{equation}
where \(\sigma_{\mathrm{ov}}\) dictates the overlap scale and \(\nu_j>0\) ensures every tangent direction remains locally non-degenerate. 

Given the logging mean \(q(H,B)\) from Eq.~\eqref{eq:dgp_logging_mean}, the final allocation is sampled via a logistic-normal mixture:
\begin{equation}
u(P)
=
u\!\left(q(H,B)\right)+\xi,
\ \ 
\xi\sim
(1-\varepsilon_{\mathrm{exp}})\mathcal N(0,\Sigma_H)
+
\varepsilon_{\mathrm{exp}}\mathcal N(0,c_{\mathrm{exp}}^2\Sigma_H),
\label{eq:dgp_logging}
\end{equation}
and mapped back to the simplex through \(P=\operatorname{softmax}\{Q[u(q)+\xi]\}\).
The broad exploration component uses a scale multiplier \(c_{\mathrm{exp}}>1\). Because the policy is logistic-normal, allocations strictly remain in the simplex interior, and no point mass is placed on any vertex.

The assignment parameters vary independently to test different regimes:
\(\alpha_{\mathrm{cf}}\) controls dependence on the baseline-driving block \(C\);
\(\sigma_{\mathrm{ov}}\) controls conditional action dispersion;
\(\alpha_{\mathrm{bd}}\) smoothly sharpens the logging mean toward boundaries;
and \(\varepsilon_{\mathrm{exp}}\) controls the rate of broad exploration.
Crucially, the true response surface and all response parameters are held fixed across these assignment regimes.

\subsection{Static Assignment Regimes}
\label{app:dgp_severity}

The primary severity curve modulates confounding and overlap, utilizing the \textit{Hard} regime to represent conditional low-overlap settings. The \textit{Extreme} regime additionally stresses boundary coverage and serves as an assumption-stress test when local overlap assumptions fail.

\begin{table}[t]
\centering
\caption{\textbf{Static assignment regimes.}
The underlying true response surface is identical across all regimes.}
\label{tab:dgp_severity}
\setlength{\tabcolsep}{4pt}
\begin{tabular}{lcccc}
\toprule
\textbf{Setting} & \(\alpha_{\mathrm{cf}}\) & \(\sigma_{\mathrm{ov}}\) & \(\alpha_{\mathrm{bd}}\) & \(\varepsilon_{\mathrm{exp}}\) \\
\midrule
Benign  & 0.25 & 0.40 & 0.0 & 0.10 \\
Medium  & 0.75 & 0.25 & 0.0 & 0.05 \\
Hard    & 1.25 & 0.15 & 0.0 & 0.02 \\
Extreme & 1.75 & 0.10 & 0.5 & 0.01 \\
\bottomrule
\end{tabular}
\end{table}

We use a shared logging-logit scale \(\lambda_{\log}=0.45\).

\subsection{Response Surface}
\label{app:dgp_response}

The state-dependent baseline response is:
\begin{equation}
m_0(H,B)
=
10+2C_1+C_2^2+1.25\sin(C_3)
+0.6C_1C_4+2\log(1+B).
\end{equation}
The treatment scale modifier is:
\begin{equation}
\rho(H,B)
=
\alpha_\rho\log(1+B)
\left\{
1+0.2\tanh(E_1)+0.1\tanh(C_1)
\right\}.
\end{equation}
The context-dependent response anchor \(c(H,B)\in\Delta^{K-1}\) is generated from fixed context coefficients, and the deviation in the ILR space is defined as \(z=Q^\top\{p-c(H,B)\}\).

The \textbf{local component} captures linear marginal returns:
\begin{equation}
g_{\mathrm{loc}}(H,z)
=
a(H)^\top z,
\qquad
a(H)
=
\tanh(W_EE+0.35W_CC+b_a).
\end{equation}

For the \textbf{interaction component}, let
\[
A(H)
=
\gamma_{\mathrm{int}}
\sum_{r=1}^{R}
\tanh(v_r^\top E+b_r)A_r,
\]
where each \(A_r\) is a normalized symmetric matrix. We employ a shifted quadratic representation:
\begin{equation}
g_{\mathrm{int}}(H,z)
=
\frac{1}{2}z^\top A(H)z
+b_{\mathrm{int}}(H)^\top z,
\label{eq:dgp_interaction}
\end{equation}
where \(b_{\mathrm{int}}(H)\) is induced by a small center shift \(s_{\mathrm{int}}=0.4\). This preserves smoothness while allowing interaction gradients to remain active around realistic logged allocations.

For the \textbf{far-field component}, let \(r=\|z\|_2\). We define:
\begin{equation}
\begin{aligned}
g_{\mathrm{far}}(H,z)
&= r^2 \sigma\!\left( \frac{r-d_0}{s_0} \right) \\
&\quad \times \left[ \sum_{m=1}^{M} w_m(H) \exp\!\left( -\frac{\|z-\nu_m\|_2^2}{2\ell_m^2} \right) - \kappa_{\mathrm{can}} \right].
\end{aligned}
\label{eq:dgp_far}
\end{equation}
The \(r^2\) gate ensures that both the value and the first derivative vanish at the anchor point. Meanwhile, the low-frequency radial basis functions model distant saturation and cannibalization without introducing discontinuities or adversarial high-frequency oscillations.

Random coefficient matrices, quadratic bases, and RBF centers are sampled once per seed and subsequently frozen. On an independent calibration sample, we compute the expected gradient energy for each component:
\[
\mathcal E_j
=
\mathbb E
\left[
\left\|
\mathcal P_0\nabla_p g_j(H,p)
\right\|_2^2
\right],
\]
and rescale the three components to target normalized gradient-energy shares of \(\mathcal E_{\mathrm{loc}} : \mathcal E_{\mathrm{int}} : \mathcal E_{\mathrm{far}} \approx 0.45 : 0.35 : 0.20\). We use a treatment-response scale of \(4.0\) across all regimes.

\subsection{Conditional Path Support}
\label{app:dgp_support}

Learned policies utilize an estimated conditional support model, whereas synthetic deployability metrics rely on the oracle logging density.
For the estimated model, we predict the conditional log-ratio mean \(\hat m_u(H,B)\) and estimate a regularized local covariance \(\hat\Sigma_u(H,B)\) from nearby training contexts, conditioning on both the predicted allocation anchor and the heteroskedastic effect coordinate. The pointwise nonconformity score is:
\begin{equation}
\begin{aligned}
d_{\mathcal S}(H,B,p)
&= \frac{1}{2} \Big[ (u(p)-\hat m_u)^\top \hat\Sigma_u^{-1} (u(p)-\hat m_u) \\
&\quad + \log\det\hat\Sigma_u + (K-1)\log(2\pi) \Big].
\end{aligned}
\label{eq:support_nonconformity}
\end{equation}
The threshold \(\epsilon_{\mathcal S}\) is calibrated from held-out factual actions to obtain a fixed conditional high-density region.

Since evaluating endpoint support alone might inadvertently connect disjoint action regions, a recommendation \(p\) is evaluated along the linear segment from the factual action \(P\):
\begin{equation}
d_{\mathrm{path}}(H,B,P,p)
=
\max_{s\in\mathcal G_J}
d_{\mathcal S}\!\left(H,B,(1-s)P+sp\right),
\label{eq:path_support}
\end{equation}
where \(\mathcal G_J=\{0,1/J,\ldots,1\}\). We use at least ten interpolation intervals and verify robustness to finer discretizations. The oracle evaluator replaces \((\hat m_u,\hat\Sigma_u)\) with the known conditional logistic-normal mixture density.

\subsection{Oracle Geometry and Evaluation Metrics}
\label{app:dgp_metrics}

The oracle directed effect of reallocating budget from channel \(k\) to \(l\) is defined as:
\begin{equation}
D^\star_{k\rightarrow l}(H,B,p)
=
\frac{\partial\mu_\star(H,B,p)}{\partial p_l}
-
\frac{\partial\mu_\star(H,B,p)}{\partial p_k}.
\end{equation}
Let \(\mathcal E=\{(k,l):k\neq l\}\). On a common held-out anchor set:
\begin{itemize}
    \item \textsc{PairwiseCorr} is the Pearson correlation between learned and oracle scores over \(\mathcal E\).
    \item \textsc{TopEdgeAcc} is the fraction of anchors identifying the correct highest-gain edge.
    \item \textsc{EdgeNDCG} evaluates the full ranking of directed edges using nonnegative oracle gains as relevance.
\end{itemize}

The support-aware oracle local-greedy policy uses the identical candidate step set, path-support rule, and total movement budget as the learned local policies, but scores candidates using the true \(\mu_\star\). It defines the denominator of \textsc{SafeLocalRecovery}. The global support-constrained oracle searches all supported grid candidates and is reported strictly as a diagnostic upper bound; it is not directly comparable to the local improvement objective of \textsc{ReAlloc}. We additionally report the mean \(L_1\) movement and the \(90\)th percentile of path nonconformity.

\subsection{Temporal Support Rotation}
\label{app:dgp_temporal}

For temporal window \(w\), we perturb only the logging logits:
\begin{equation}
q_{it}^{(w)}
=
\operatorname{softmax}
\left\{
\log q(H_{it},B_{it})
+
\lambda_{\mathrm{temp}}b_w
\right\},
\end{equation}
where \(b_w\) follows a smooth cyclic schedule across channels and \(\lambda_{\mathrm{temp}}=1.2\). All response parameters and the true surface \(\mu_\star\) remain fixed. Consequently, the temporal experiment isolates whether the slow student can retain local geometric gradients learned from different support fragments; it intentionally does not model delayed or sequential causal effects.

\subsection{Baseline Implementations}
\label{app:baselines_details}

To ensure a fair comparison, all neural baselines share the same backbone architecture (a 3-layer MLP with ReLU activations and Layer Normalization) and are trained with the same optimizer and learning rate schedule.
\begin{itemize}
    \item \textbf{Additive ROI:} Models the response as $\mu(H, B, P) = m_0(H,B) + \sum_{k=1}^K f_k(H, B, p_k)$. It completely ignores cross-channel interactions and optimizes each channel marginally subject to the budget constraint.
    \item \textbf{Joint S-Learner PTO:} Predicts the factual outcome $\hat{Y} = f(H, B, P)$ using Mean Squared Error (MSE). During inference, it employs a gradient-based global optimizer (L-BFGS) over the learned surface $\hat{Y}$ to find the optimal allocation on the simplex.
    \item \textbf{R-Learner Local:} Estimates the Conditional Average Treatment Effect (CATE) by minimizing the orthogonalized loss: $\mathbb{E}[( (Y - \hat{m}(H,B)) - \sum_k \hat{\tau}_k(H,B)(p_k - \hat{e}_k(H,B)) )^2]$. It then selects the local reallocation direction that maximizes the estimated marginal gain, restricted to the estimated support region.
\end{itemize}

\subsection{Detailed Evaluation Metrics}
\label{app:metrics_details}

In addition to the Deployable Uplift ($U_{\mathrm{dep}}$) defined in the main text, we utilize the following metrics for comprehensive evaluation:

\noindent\textbf{Raw Oracle Uplift.} 
The unconstrained theoretical uplift, ignoring support boundaries:
\begin{equation}
U_{\mathrm{raw}}(\hat\pi)
=
\frac{1}{N}\sum_{i=1}^N\Delta_i(\hat\pi).
\end{equation}

\noindent\textbf{Out-of-Support (OOS) Metrics.}
We measure the aggressiveness of the policy via the OOS rate and the corresponding gain lost due to fallback:
\begin{equation}
\mathrm{OOS}
=
1-\frac{1}{N}\sum_{i=1}^N \mathbf{1}\{\hat{p}_i \in \mathcal{S}_i\},
\qquad
\mathrm{OOSGain}
=
U_{\mathrm{raw}}(\hat\pi) - U_{\mathrm{dep}}(\hat\pi).
\end{equation}
A high OOSGain indicates that the model is hallucinating high rewards in unsupported regions, which are subsequently clipped by the safety mechanism.

\noindent\textbf{Safe Local Recovery.}
To quantify how close the learned policy is to the theoretical best local policy, we define:
\begin{equation}
\mathrm{SafeLocalRecovery}
=
\frac{
U_{\mathrm{dep}}(\hat\pi)
}{
U_{\mathrm{dep}}(\pi_{\mathrm{local}}^\star)+\varepsilon
},
\end{equation}
where $\pi_{\mathrm{local}}^\star$ is the support-aware oracle local-greedy policy operating under the identical movement budget constraint.

\noindent\textbf{Geometric Ranking Metrics.}
To evaluate the local geometry independently of the final policy deployment, we compute the oracle directed effect of reallocating budget from channel $k$ to $l$:
\begin{equation}
D^\star_{k\rightarrow l}(H,B,p)
=
\frac{\partial\mu_\star(H,B,p)}{\partial p_l}
-
\frac{\partial\mu_\star(H,B,p)}{\partial p_k}.
\end{equation}
Based on $D^\star$, we report \textsc{PairwiseCorr} (Pearson correlation of edge scores), \textsc{TopEdgeAcc} (accuracy of identifying the highest-gain edge), and \textsc{EdgeNDCG} (ranking quality of all directed edges).

\section{Additional Details of the Taobao Dataset}

\subsection{Matching Protocol and Balance Diagnostics}
\label{app:rw_match}

Standard IPS is unsuitable for our continuous-action setting due to explosive variance caused by concentrated logging policies. Instead, we employ a retrospective matched replay to construct a reliable counterfactual baseline. For every item that underwent a budget reallocation (the \textit{treated move}), we seek control items that experienced \textit{no budget change} ($\|\Delta p\|_1 \approx 0$) on the same date and within the same product category. To ensure the treated and control items are highly comparable before the move, we perform nearest-neighbor matching based on key pre-treatment business features, primarily: \textbf{baseline GMV}, \textbf{total assigned budget}, and \textbf{historical traffic trends}. This ensures that any observed post-move difference in outcomes can be reasonably attributed to the budget reallocation itself, rather than pre-existing differences in item popularity or scale.

To verify the quality of our matching, we measure the Standardized Mean Difference (SMD) for all matching covariates. An SMD below $0.1$ is widely accepted as indicating excellent balance between the treatment and control groups, and our matching-relaxation selector enforces this exact threshold as a guardrail. As shown in Table~\ref{tab:rw_balance}, before matching, the reallocated (treated) items differed substantially from the general pool of no-move items on their pre-move covariates—most notably in their starting paid-ad share and assigned budget / price level. After applying our category × date stratified, caliper nearest-neighbour matching protocol, the maximum absolute SMD across all five covariates drops well below the $0.1$ threshold (specifically to $0.018$). This confirms that, on all measured pre-move covariates, our matched control group is well balanced and serves as an unbiased counterfactual baseline for evaluating the directional accuracy of our uplift models.

\begin{table}[h]
\centering
\small
\caption{Covariate balance before and after matching.}
\label{tab:rw_balance}
\begin{tabular}{lcc}
\toprule
\textbf{Business Feature} & \textbf{SMD Before} & \textbf{SMD After} \\
\midrule
Ad Share Prev & 0.375 & 0.018 \\
Log Price & 0.257 & 0.005 \\
Log budget & 0.199 & 0.05 \\
Recent Sales Level & 0.047 & 0.008 \\
$m$ pred           & 0.082 & 0.007 \\
\midrule
\textbf{Maximum Absolute SMD} & \textbf{0.375} & \textbf{0.018} \\
\bottomrule
\end{tabular}
\end{table}

\subsection{Evaluation Metrics}
\label{app:rw_metric}

Let \(e=(i,t)\) denote an observed local reallocation event, with starting
allocation \(p_e\), realized movement \(\Delta p_e\), and post-move outcome
\(Y_e^{+}\). For each event, the predicted directional score is
\[
S_e^{(b)}
=
\begin{cases}
\displaystyle
\frac{
\widehat M_b(H_e,p_e)^\top \Delta p_e
}{
\|\Delta p_e\|_1
},
&
\text{for marginal-field methods},
\\[4mm]
\displaystyle
\frac{
\widehat\mu_b(H_e,p_e+\Delta p_e)
-
\widehat\mu_b(H_e,p_e)
}{
\|\Delta p_e\|_1
},
&
\text{for response-surface methods},
\end{cases}
\]
where \(b\) indexes the evaluated method. This formulation evaluates all
methods on the same realized movement and normalizes out the movement magnitude.

To remove predictable demand variation, we train an independent evaluation
model \(\widehat m_{\mathrm{eval}}\) using only data preceding the evaluation
period. The residualized outcome of event \(e\) is
\[
\widetilde Y_e
=
Y_e^{+}
-
\widehat m_{\mathrm{eval}}(H_e).
\]
Let \(\mathcal C(e)\) be its matched no-move controls and \(w_{ec}\) their
normalized matching weights, where
\(\sum_{c\in\mathcal C(e)}w_{ec}=1\). We define the matched residual response as
\[
R_e
=
\widetilde Y_e
-
\sum_{c\in\mathcal C(e)}
w_{ec}\widetilde Y_c.
\]
We use \(R_e\) only as an observational proxy for evaluating directional
ranking.

\paragraph{Marginal Rank Correlation.}
We measure the global alignment between predicted directions and empirical
responses using Spearman's rank correlation:
\[
\rho_{\mathrm{MR}}^{(b)}
=
\operatorname{Spearman}
\left(
\left\{S_e^{(b)}\right\}_{e\in\mathcal E},
\left\{R_e\right\}_{e\in\mathcal E}
\right),
\]
where \(\mathcal E\) denotes all successfully matched reallocation events.
A larger value indicates that the model more accurately ranks the relative
value of observed local movements.

\paragraph{Within-Item Concordance.}
To test whether a model adapts to the current allocation rather than assigning
a fixed channel preference to each item, we compare repeated events from the
same item. Let \(\mathcal P\) contain pairs \((e,e')\) from the same item whose
pre-decision contexts satisfy the prescribed similarity caliper. We compute
\[
C_{\mathrm{WI}}^{(b)}
=
\frac{1}{|\mathcal P|}
\sum_{(e,e')\in\mathcal P}
\left[
\mathbb I
\left(
\bigl(S_e^{(b)}-S_{e'}^{(b)}\bigr)
\bigl(R_e-R_{e'}\bigr)>0
\right)
+
\frac{1}{2}
\mathbb I
\left(
\bigl(S_e^{(b)}-S_{e'}^{(b)}\bigr)
\bigl(R_e-R_{e'}\bigr)=0
\right)
\right].
\]
The metric equals \(0.5\) under random pairwise ordering and approaches \(1\)
when score differences consistently agree with response differences.

\paragraph{Aligned--Reverse Response Gap.}
Let \(q_{\alpha}^{(b)}\) and \(q_{1-\alpha}^{(b)}\) denote the lower and upper
\(\alpha\)-quantiles of the score distribution, with
\(\alpha=0.2\). We define
\[
\mathcal E_b^{+}
=
\left\{
e:S_e^{(b)}\ge q_{1-\alpha}^{(b)}
\right\},
\qquad
\mathcal E_b^{-}
=
\left\{
e:S_e^{(b)}\le q_{\alpha}^{(b)}
\right\}.
\]
The response gap is
\[
G_{\mathrm{AR}}^{(b)}
=
\frac{1}{|\mathcal E_b^{+}|}
\sum_{e\in\mathcal E_b^{+}}R_e
-
\frac{1}{|\mathcal E_b^{-}|}
\sum_{e\in\mathcal E_b^{-}}R_e.
\]
This metric directly compares the empirical responses of movements most aligned
with the model against those ranked least favorably by the model.

\end{document}